\newcommand{\citep}{\cite}
\newcommand{\citet}{\cite}
\newtheorem{example}{Example}
\newtheorem{lemma}{Lemma}
\DeclareMathOperator*{\argmin}{arg\,min}
\newcommand{\ica}{\hspace{0.25cm}}
\title{Approximate Inference with Amortised MCMC}
\author{
  Yingzhen Li \\
  University of Cambridge\\
  Cambridge, CB2 1PZ, UK \\
  \texttt{yl494@cam.ac.uk} \\
  \And
  Richard E.~Turner \\
  University of Cambridge\\
  Cambridge, CB2 1PZ, UK \\
\texttt{ret26@cam.ac.uk} \\
  \And
  Qiang Liu \\
  Dartmouth College\\
  Hanover, NH 03755, USA \\
\texttt{qiang.liu@dartmouth.edu} \\
}
\newcommand{\vparam}{\bm{\phi}}	
\newcommand{\z}{\bm{z}}
\begin{document}

\maketitle

\begin{abstract} 
We propose a novel approximate inference framework that approximates a target distribution by amortising the dynamics of a user-selected Markov chain Monte Carlo (MCMC) sampler. 
The idea is to initialise MCMC using samples from an approximation network, apply the MCMC operator to improve these samples, and finally use the samples to update the approximation network thereby improving its quality.
This provides a new generic framework for approximate inference, allowing us to deploy highly complex, or implicitly defined approximation families with intractable densities,  
including approximations produced by warping a source of randomness through a deep neural network. Experiments consider Bayesian neural network classification and image modelling with deep generative models. Deep models trained using amortised MCMC are shown to generate realistic looking samples as well as producing diverse imputations for images with regions of missing pixels.
\end{abstract}

\section{Introduction}
\label{sec:intro}
Probabilistic modelling provides powerful tools for analysing and making future predictions from data. The Bayesian paradigm offers well-calibrated uncertainty estimates on unseen data, by representing the variability of model parameters given the current observations through the posterior distribution. 
However, Bayesian methods are typically computationally expensive, due to the intractability of 
evaluating posterior or marginal probabilities. 
This is especially true for complex models like neural networks, for which a Bayesian would treat all the weight matrices as random variables and integrate them out. Hence approximations have to be applied to overcome this computational intractability in order to make Bayesian methods practical for modern machine learning tasks.

This work considers the problem of amortised inference, 
in which we approximate a given intractable posterior distribution $p$ with a \emph{sampler} $q$, a distribution from which we can draw exact samples. Compared with the typical Monte Carlo (MC) which approximates $p$ with a fixed set of samples, 
the amortised inference distributes the computation cost to training the sampler $q$. 
This allows us to quickly generate a large number of samples at the testing time, and 
can significantly save time and storage when inference is required repeatedly as inner loops of other algorithms, such as in training latent variable models and structured prediction. 

Variational inference \citep{jordan:vi1999} and its stochastic variants \citep{hoffman:svi2013, kingma:vae2014, ranganath:bbvi2014} provide a straightforward approach for amortised inference, 
in which we find an optimal $q$ from a parametric family of distributions $\mathcal Q$ 
by minimizing certain divergence measure (often KL divergence) $\mathrm{D}[q||p]$. 
Unfortunately, except a few very recent attempts \cite{ranganath:ovi2016,wang:amortisedsvgd2016,mescheder:avb2017}, most existing variational approaches require the distributions in $\mathcal Q$ to have  
computationally tractable density functions in order to solve the optimization problem. 
This forms a major restriction on the choice of the approximation set $\mathcal Q$, since 
exact samplers, in the most general form, 
are random variables of form 
$\bm{x} = \bm{f}(\bm{\epsilon})$, where $\bm{f}$ is a (non-linear) transform function, 
and $\bm{\epsilon}$ is some standard distribution such as Gaussian distribution. Except simple cases, e.g.~when $\bm{f}$ is linear, 
it is difficult to explicitly calculate the density function of such random variables.  
%
Therefore, a key challenge is to develop efficient approximate inference algorithms using generic samplers, which we refer as \emph{wild approximations}, without needing to calculate the density functions.
. 
Such algorithms would allow us to 
deploy more flexible families of approximate distributions to obtain better posterior approximations. 

In this paper we develop a new, simple principle for wild variational inference based on \emph{amortising MCMC dynamics}. 
Our method deploys a student-teacher, or actor-critic framework 
to leverage existing MCMC dynamics to a supervisor for training samplers $q$, by iterating the following steps: 
\begin{itemize}
\vspace{-0.05in}
\setlength\itemsep{0.0em}
\item[(1)] the sampler $q$ (student) generates initial samples which are shown to an MCMC sampler;
\item[(2)] the MCMC sampler (teacher) improves the samples by running MCMC transitions;
\item[(3)] the sampler $q$ takes feedback from the teacher and adjust itself in order to generate the improved samples next time. 
\vspace{-0.05in}
\end{itemize}
This framework is highly generic, works for arbitrary sampler families, and can take the advantage of any existing MCMC dynamics for efficient amortised inference. 
Empirical results show that our method works efficiently on Bayesian neural networks and deep generative modelling. 
%
%
%

\section{Background}

\paragraph{Bayesian inference} 
Consider a probabilistic model $p(\bm{x}|\bm{z}, \bm{\theta})$ along with a prior distribution $p_0(\bm{z})$,
where $\bm{x}$ denotes an observed variable, $\bm z$ an 
unknown latent variable, 
and $\bm\theta$ a hyper-parameter that 
is assumed to be given, or will be learned by 
maximizing the marginal likelihood $\log p(\bm x | \bm \theta).$ 
The key computational task of interest is to approximate the posterior distribution of $\bm z$: 
$$
p(\bm{z}|\bm{x}, \bm{\theta}) = \frac{1}{p(\bm{x}|\bm{\theta})} p_0(\bm{z}) p(\bm{x}|\bm{z}, \bm{\theta}), 
~~~~~~~
p(\bm{x}|\bm{\theta}) = \int p_0(\bm{z}) p(\bm{x}|\bm{z}, \bm{\theta}) d\bm z. 
$$
This includes both drawing samples from $p(\bm{z}|\bm{x}, \bm{\theta})$ in order to estimate related average quantities, 
as well as estimating the normalization constant $p(\bm{x}|\bm{\theta})$ for hyper-parameter optimisation or model selection.  
Since both the data $\bm{x}$ and $\bm{\theta}$ are assumed to be fixed in inference, we may drop the dependency on them when it is clear from the context.

\paragraph{MCMC basics}
MCMC provides a powerful, flexible framework for 
drawing (approximate) samples from given distributions.  
An MCMC algorithm is typically specified by its \emph{transition kernel} $\mathcal{K}(\bm{z}' |  \bm{z})$
whose \emph{unique stationary distribution} equals the target distribution $p$ of interest, that is,   
\begin{align}
q = p 
~~~~~~~~~~
\iff
~~~~~~~~~~
q(\bm{z}) = \int q(\bm{z}') \mathcal{K}(\bm{z}|\bm{z}') d\bm{z}', ~~~~\forall \bm{z}.  
\label{eq:mcmc_fixed_point}
\end{align}
This fixed point equation fully characterizes the target distribution $p$,
and hence the 
 inference regarding $p$ can be framed as (approximately) solving equation \eqref{eq:mcmc_fixed_point}. 
 In particular, MCMC algorithms 
can be viewed as stochastic approximations for solving \eqref{eq:mcmc_fixed_point}, 
in which we start with drawing $\bm z_0$ from an initial distribution $q_0$ 
and iteratively draw sample $\bm z_{t}$ at the $t$-th iteration 
from the transition kernel conditioned on the previous state, 
i.e.~
$\bm{z}_t | \bm{z}_{t - 1} \sim \mathcal{K}(\bm{z}_t | \bm{z}_{t-1}).$ 
In this way, the distribution $q_t$ of $\bm z_{t}$ can be viewed as obtained by a fixed point update of form 
\begin{align}\label{eq:qtt}
q_{t}(\bm{z}) \leftarrow  \mathcal K q_{t-1} (\bm z), ~~~~~~\text{where}~~~
 \mathcal K q_{t-1}(\bm z) :=  \int q_{t-1}(\bm{z}') \mathcal{K}(\bm{z}|\bm{z}') d\bm{z}',
\end{align}
so that recursively, we have $q_t = \mathcal K_t q_0$, where $\mathcal K_t$ denotes the $t$-step  transition kernel. 
The standard theory of Markov chains suggests that 
the Markov transition monotonically decreases the KL divergence (\cite{cover:itbook1991}, see also Lemma~1 in the appendix), that is, 
\begin{align}\label{kl}
\mathrm{D_{KL}}[q_{t} ~||~ p] \leq \mathrm{D_{KL}} [q_{t-1} ~||~ p]. 
\end{align}
Therefore, $q_{t}$ converges to the stationary distribution $p$ as $t\to \infty$ under proper conditions.

\section{Amortised MCMC}
\label{sec:ap}
MCMC can be viewed as approximating 
the fixed point update \eqref{eq:qtt} 
in a \emph{non-parametric} fashion, 
returning a set of fixed samples for approximating $p$. 
This motivates amortised MCMC which uses more general parametric approximations of the fixed point update \eqref{eq:qtt} to train \emph{parametric samplers} for amortised inference. 
In the sequel, we introduce our generic framework in Section~\ref{sec:main}, 
discuss in Section~\ref{sec:thechoice} some particular algorithmic choices, 
and apply amortised MCMC to approximate maximum likelihood estimation (MLE) in Section~\ref{approximate}. 
%
\subsection{Main idea: learning to distil MCMC}\label{sec:main} 
Let $\mathcal Q = \{q_{\bm\phi}\}$
be a set of candidate samplers parametrised by $\bm\phi$. 
Our goal is to find an optimal $q_{\bm\phi}$ to closely approximate the posterior distribution $p$ of interest. 
We achieve this by approximating the fixed point update \eqref{eq:qtt}. 
Because of the parametrisation, 
an additional projection step is required to maintain $q$ inside $\mathcal Q$, 
motivating the following update rule at the $t$-th iteration: 
\begin{align}\label{eq:qphi}
\bm \phi_{t} \gets \argmin_{\bm \phi} ~ \mathrm D[ q_{T} ~|| ~  q_{\bm \phi}], ~~~~~~~ 
q_{T} := \mathcal K_T q_{\bm \phi_{t-1}}. 
\end{align}
where $\mathrm D[\cdot || \cdot]$ is 
some divergence measure between distributions
whose choice is discussed in Section~\ref{sec:thechoice}. 
Note that we extended \eqref{eq:qtt} to use the $T$-step transition kernel $\mathcal K_{T}$.  
If $\mathcal Q$ is taken to be large enough so that $\mathcal K_T q_{\phi_{t-1}} \in \mathcal Q$, 
then the projection update \eqref{eq:qphi} (with $T=1$) reduces to \eqref{eq:qtt}. 
In practice, a gradient descent method can be used to solve \eqref{eq:qphi}: 
\begin{equation}
\bm{\phi}_{t} 
\leftarrow \bm{\phi}_{t-1} - \eta  \nabla_{\bm\phi}\mathrm D[ \mathcal K_T q_{\bm \phi_{t-1}} ~|| ~  q_{\bm \phi}]|_{\bm{\phi} = \bm{\phi}_{t-1}}. 
\label{eq:gradient_update_q}
\end{equation}
It is often intractable to evaluate $\nabla_{\bm\phi}\mathrm D[ q_T ~|| ~  q_{\bm \phi}]$ thus an approximation is needed. 
This can be done by approximating 
 $q_{\bm\phi_{t-1}}$ with samples $\{\bm z^{k}_0\}$ drawn from it, 
 and approximating $q_T = \mathcal K_T q_{\bm{\phi}_{t-1}}$
 with sample $\{\bm z^{k}_T\}$ drawn 
 by following the Markov transition $\mathcal K(\cdot | \cdot)$ for $T$ times starting at $\{\bm z^{k}_0\}$. 
These samples are then used to 
estimate the gradient and update $\bm{\phi}_t$  by \eqref{eq:gradient_update_q} 
in order to \emph{``move'' $q_{\bm{\phi}_{t-1}}$ towards $q_T = \mathcal K_T q_{\bm{\phi}_{t-1}}$}, 
which is closer to the target distribution according to \eqref{kl}. 
To summarise, our generic framework requires three main ingredients: 
\begin{itemize}
\vspace{-0.05in}
\setlength\itemsep{0.0em}
	\item[(1)] a parametric set $\mathcal Q = \{q_{\bm \phi}\}$ of the sampler distributions (the student);
	\item[(2)] an MCMC dynamics with kernel $\mathcal{K}(\bm{z}_t | \bm{z}_{t-1})$ (the teacher);
	\item[(3)] a divergence $\mathrm D[\cdot||\cdot]$ and update rule for $\bm \phi$ (the feedback). 
\vspace{-0.05in}
\end{itemize}
By selecting these components tailored to a specific approximate inference task, 
the method provides a highly generic framework, applicable to both continuous and discrete distribution cases, and as we shall see later, extensible to wild approximations without a tractable density.

\paragraph{Remark}
MCMC can be viewed as a special case of our framework, in which the samplers $\mathcal Q$
are empirical distributions parametrised by the MCMC samples $\bm z_t$, that is, $q_{\bm \phi_t}(\bm z)  = \delta(\bm z- {\bm \phi_t})$, ~ 
$\bm{\phi}_t = \bm z_t$, where the sample $\bm z_t$ is treated as the parameter $\bm \phi_t$ in our framework, 
and is the only possible output of the sampler $q_{\bm \phi_t}.$  
Our framework allows more flexible parametrisations of samplers, which significantly saves running time and storage at test time. 

\paragraph{Remark}
The same type of projected fixed point updates as \eqref{eq:qphi} 
have been widely used in reinforcement learning (RL), 
including deep Q learning (DQN) \citep{mnih2013playing}, 
and temporal difference learning with function approximations in general \citep{sutton1998reinforcement}.
In this scenario the Q- or V- networks 
are iteratively adjusted by applying projected fixed point updates of the Bellman equation. 
This provides an opportunity to 
strengthen our method with the vast RL literature. 
For example, similar to the case of RL, 
the convergence of updates of form \eqref{eq:qphi} are not theoretically guaranteed in general, especially when the parametric set $\mathcal Q$ 
 is complex or non-convex.
However, the practical stabilisation tricks developed in the DQN literature \citep{mnih2013playing} can be potentially 
 applied to our case, 
 and theoretical analysis developed in RL \citep{tsitsiklis1997analysis}
 can be borrowed to establish convergence of our method  under simple assumptions (e.g., when $\mathcal Q$ is linear or convex). 


\subsection{The choice of update rule} \label{sec:thechoice}
The choice of the update signal and ways to estimate it plays a crucial role in our framework, 
and it should be carefully selected to ensure both strong discrimination power and 
computational tractability with respect to the parametric family of $q$ that we use. 
We discuss three update rules in the following.
%
\paragraph{KL divergence minimisation} A simple approach is to use the inclusive KL-divergence in \eqref{eq:qphi}: 
$$
\mathrm{D}_{\text{KL}}[q_T~||~q_{\bm\phi}] = \mathbb{E}_{q_T} \left[ \log q_T(\bm{z}|\bm{x}) - \log q_{\bm\phi}(\bm{z}|\bm{x}) \right],
$$
and for the purpose of optimising $\bm\phi$, it only requires an MC estimate of $-\mathbb{E}_{q_T}[\log q_{\bm\phi}(\bm{z}|\bm{x})]$. 
This gives a simple algorithm that is a hybrid of MCMC and VI, and appears to be new to our knowledge. Similar algorithms include the so called cross entropy method \citep{crossentrode2005tutorial} which replaces $q_T$ with an importance weighted distribution, and methods for tuning proposal distributions for sequential Monte Carlo \cite{cornebise:smc2009,gu:nasmc2015}.
\paragraph{Adversarially estimated divergences} 
Unfortunately, the inclusive KL divergence requires the density of the sampler $q_{\bm\phi}$ to be evaluated, 
and can not be used directly for wild approximations.
Instead, we need to estimate the divergences based on samples $\{\bm z_0^k\}\sim q_{\bm\phi}$ and $\{\bm z_T^k\}\sim q_T$. 
To address this, we borrow the 
idea of generative adversarial networks (GAN) \cite{goodfellow:gan2014} to construct a sample-based estimator of the selected divergence. As an example, consider the Jensen-Shannon divergence:
$
\mathrm{D}_{\text{JS}}[q_T||q] = \frac{1}{2} \mathrm{D}_{\text{KL}}[q_T || \tilde{q}] + \frac{1}{2} \mathrm{D}_{\text{KL}}[q || \tilde{q}],
$
with $\tilde{q} = \frac{1}{2} q + \frac{1}{2} q_T$.
Since none of the three distributions have tractable density, a discriminator $d_{\bm{\psi}}(\bm{z}|\bm{x})$ is trained to provide a stochastic lower-bound 
\begin{equation}
\mathrm{D}_{\text{JS}}[q_T||q] \geq \mathrm{D}_{\text{adv}}[\{\bm{z}_T^k \} || \{ \bm{z}_0^k \} ] = \frac{1}{K} \sum_{k=1}^K \log \sigma(d_{\bm{\psi}}(\bm{z}_T^k | \bm{x}))
+ \frac{1}{K} \sum_{k=1}^K \log (1 - \sigma(d_{\bm{\psi}}(\bm{z}_0^k | \bm{x}))),
\label{eq:gan_objective}
\end{equation}
with $\sigma(\cdot)$ the sigmoid function and $\bm{z}_0^k$, $\bm{z}_T^k$ samples from $q$ and $q_T$, respectively. 
Recent work \cite{nowozin:fgan2016} extends adversarial training to $f$-divergences where the two KL-divergences are special cases in that rich family. In such case $\mathrm{D}_{\text{adv}}$ also corresponds to the variational lower-bound to the selected $f$-divergence and the discriminator can also be defined accordingly. Furthermore, the density ratio estimation literature \cite{nguyen2010estimating, sugiyama:ratio2009,sugiyama:ratio2012} suggests that the discriminator $d_{\bm{\psi}}$ in (\ref{eq:gan_objective}) can be used to estimate $\log ({q_T}/{q_{\bm\phi}})$, i.e.~the objective function for $q_{\bm\phi}$ could be decoupled from that for the discriminator \cite{mohamed:gan2016}.

\paragraph{Energy matching} 
An alternative approach to matching
$q_{\bm\phi}$ with 
 $q_T$ is to match their first $M$ moments. 
In particular, matching the mean and variance is equivalent to minimising $\mathrm{D}_{\text{KL}}[q_T || q_{\bm\phi}]$ with $q_T$ fixed and $q_{\bm\phi}$ a Gaussian distribution. However it is difficult to know beforehand which moments are important for accurate approximations. Instead we propose \emph{energy matching}, which matches the expectation of the log of the joint distribution $ p(\bm x, \bm z) \propto p(\bm{z}| \bm{x})$ under $q_\phi$ and $q_T$:
\begin{equation}
\mathrm D_{\text{em}}[q_T ~||~ q_{\bm\phi}]  = \big| \mathbb{E}_{q_T(\bm{z}_T|\bm{x})}\left[ \log p(\bm{x}, \bm{z}_T) \right] - \mathbb{E}_{q_{\vparam}(\bm{z}|\bm{x})}\left[ \log p(\bm{x}, \bm{z}) \right] \big|^{\beta}, \quad \beta > 0,
\label{eq:energy_matching}
\end{equation} 
Although $\mathrm{D}_{\text{em}}[\cdot || \cdot ]$ is not a valid divergence measure since $\mathrm{D}_{\text{em}}[q_T ~||~ q_{\bm\phi}]=0$ does not imply $q_{\bm\phi} = q_T$, 
this construction puts emphasis on moments that are mostly important for accurate predictive distribution approximation, which is still good for inference. 
Furthermore, as (\ref{eq:energy_matching}) can be approximated with Monte Carlo methods which only require samples from $q$, this energy matching objective can be applied to wild variational inference settings as it does not require density evaluation of $q_{\bm\phi}$ and $q_{T}$. Another motivation from contrastive divergence \citep{hinton:cd2002} is also discussed in the appendix.

\subsection{Approximate MLE with amortised MCMC}\label{approximate}
Learning latent variable models have become an important topic with increasing interest. Our amortised inference method can be used to develop more flexible and accurate approximations for learning.  
Consider the variational auto-encoder (VAE) \citep{kingma:vae2014,rezende:vae2014} which approximates maximum likelihood training (MLE) by maximising the following variational lower-bound over the model parameters $\bm{\theta}$ and variational parameters $\bm{\phi}$:
\begin{equation}
\max_{\theta, ~\bm\phi} 
\big\{\mathbb{E}_{q_{\bm\phi}} \left[ \log p(\bm{x} | \bm{z}; \bm{\theta}) \right] - \mathrm{D}_{\text{KL}}[q_{\bm\phi}(\bm{z}|\bm{x})||p_0(\bm{z})]
= \log p(\bm{x}|\bm{\theta}) - \mathrm{D}_{\text{KL}}[q_{\bm\phi}(\bm{z}|\bm{x}) || p(\bm{z}| \bm{x}, \bm{\theta}) ] \big\}. 
\label{eq:vae_mle}
\end{equation}
Our method can be directly used to update $\bm\phi$ in \eqref{eq:vae_mle}. 
This can be done using the inclusive KL divergence if the density  $q_{\bm\phi}$ is tractable, 
and adversarially estimated divergence or energy matching for wild approximation when the density $q_{\bm \phi}$ is intractable.  

Next we turn to the optimisation of the hyper-parameters $\bm{\theta}$ where we decouple their objective function from that of $\bm{\phi}$. 
Because $\mathrm{D}_{\text{KL}}[q || p] \geq \mathrm{D}_{\text{KL}}[q_T || p]$ when $p$ is the stationary distribution of the MCMC, 
the following  objective  forms a tighter lower-bound to the marginal likelihood: 
\begin{equation}
\log p(\bm{x}|\bm{\theta}) - \mathrm{D}_{\text{KL}}[q_{T}(\bm{z}|\bm{x}) || p(\bm{z}| \bm{x}, \bm{\theta}) ] = \mathbb{E}_{q_{T}}[ \log p(\bm{x} | \bm{z}, \bm{\theta}) ] + \text{const of } \bm{\theta}. 
\label{eq:mcmc_mle}
\end{equation}
Empirical evidences \cite{burda:iwae2016,li:vrbound2016} suggested tighter lower-bounds often lead to better results. Monte Carlo estimation is applied to estimate the lower-bound (\ref{eq:mcmc_mle}) with samples $\{ \bm{z}_T^k \} \sim q_T$.

The full method when using adversarially estimated divergences 
is presented in Algorithm \ref{alg:amortised_mcmc_adversarial}, in which we train a discriminator $d_{\bm{\phi}}$ to estimate the selected divergence, and propagate learning signals back through the samples from $q_{\bm\phi}$. 
 Note here the update step for the discriminator and the $q_{\psi}$ could be executed for more iterations in order to achieve better approximations to the current posterior. This strategy turns the algorithm into a stochastic EM with MCMC methods approximating the E-step \cite{celeux:sem1985,celeux:sem1995}. 
RKHS-based and energy-based moments \cite{gretton:mmd2012, sejdinovic2013equivalence} can also be applied as the discrepancy measure in step 2, but this is not explored here. 


\begin{algorithm}[t] 
\begin{algorithmic}[1] 
	\STATE Sample $\bm{z}_{0}^{1}, ..., \bm{z}_{0}^K \sim q_{\bm{\phi}}(\bm{z}|\bm{x})$, and simulate $\bm{z}_{T}^{k} \sim \mathcal K_T(\cdot |\bm{z}_{k}^{0} )$, for $k = 1,\ldots, K$. 
	\STATE Compute $\mathrm{D}_{\text{adv}}[ \{ \bm{z}_T^k \}  || \{ \bm{z}_0^k \} ]$ using discriminator $\bm\psi$.
	\STATE Update $\bm{\phi}$ and $\bm{\psi}$ by 1-step gradient descent/ascent.
	\STATE If learning $\bm{\theta}$: compute 1-step gradient ascent with
	$ \nabla_{\bm{\theta}} \frac{1}{K} \sum_{k=1}^K \log p(\bm{\bm{x}} | \bm{z}_T^k, \bm{\theta}).$
\end{algorithmic}
\caption{Amortised MCMC with adversarially estimated divergences (one update iteration)}
\label{alg:amortised_mcmc_adversarial} 
\end{algorithm}



\section{Related Work}\label{related}
Since \citep{goodfellow:gan2014}, generative adversarial networks (GANs) have attracted large attention from both academia and industry. 
We should distinguish the problem scope of our work with that of GANs: 
amortised MCMC aims to match an (implicitly defined) $q$ to the posterior \emph{distribution $p$}, 
while GANs aim to match a $q$ with an observed \emph{sample}, which we leverage as an inner loop for the divergence minimisation. Hence our framework could also benefit from the recent advances in this area \cite{nowozin:fgan2016,mohamed:gan2016}.

The amortisation framework is in similar spirit to \cite{snelson:compact2005,korattikara:dark2015} in that both approaches ``distil'' an MCMC sampler with a parametric model. Unlike the presented framework, \cite{snelson:compact2005} and \cite{korattikara:dark2015} used a student model to approximate the predictive likelihood, and that student model is not used to initialise the MCMC transitions. We believe that initialising MCMC with the student model is important in amortising dynamics, as the teacher can ``monitor'' the student's progress and provide learning signals tailored to the student's need. Moreover, since the initialisation is improved after each student update, the quality of the teacher's samples also improves. Another related, but different approach \cite{rasmussen:gp_hmc2003} considered speeding-up Hybrid Monte Carlo \citep{neal:mcmc2011} by approximating the transition kernel using a Gaussian process. Amortised MCMC could benefit from this line of work if the MCMC updates are too expensive.  

Perhaps the most related approaches to our framework (in the sense of using $q$ of flexible forms) are operator variational inference (OPVI, \citep{ranganath:ovi2016}), amortised SVGD  \citep{wang:amortisedsvgd2016}, and adversarial variational Bayes (AVB \citep{mescheder:avb2017}, also concurrently proposed by \citep{li:wild2016,huszar:implicit2017,tran:implicit2017}). These works assumed the $q_{\bm\phi}$ distribution to be represented by a neural network warping input noise. OPVI minimises the Stein discrepancy \cite{stein:stein_method1972} between the exact and approximate posterior, where the optimal test function is determined by optimising a discriminator. 
Though theoretically appealing, this method seems still impractical for large scale problems. 
Amortised SVGD can be viewed as a special case of our framework, which specifically uses a deterministic Stein variational gradient dynamic \cite{liu:svgd2016} and an $l_2$-norm as the divergence measure.  
AVB estimates the KL-divergence $\mathrm{D}_{\text{KL}}[q||p_0]$ in the variational lower-bound (\ref{eq:vae_mle}) with GAN and density ratio estimation, making it closely related to the adversarial auto-encoder \cite{makhzani:adversarial_ae2015}. However we conjecture that the main learning signal of AVB comes from the ``reconstruction error'' term $\mathbb{E}_q[\log p(\bm{x}|\bm{z}, \bm{\theta})]$, and the regularisation power strongly depends on the adversarial estimation of $\mathrm{D}_{\text{KL}}[q||p_0]$, which can be weak as the discriminator is non-optimal in almost all cases.

\section{Experiments}
We evaluate amortised MCMC with both toy and real-world examples. For simplicity we refer the proposed framework as AMC. In the appendix experimental settings are further presented. Code will be released at \url{https://github.com/FirstAuthor/AmortisedMCMC}. 

\subsection{Synthetic example: fitting a mixture of Gaussians}

We first consider fitting a Gaussian mixture 
$p(z) = \frac{1}{2} \mathcal{N}(z; -3, 1) + \frac{1}{2}\mathcal{N}(z; 3, 1)$ with the variational program proposed by \cite{ranganath:ovi2016} as the following:
$\epsilon_1, \epsilon_2, \epsilon_3 \sim \mathcal{N}(\epsilon; 0, 1)$, $z = \mathbbm{1}_{\epsilon_3 \geq 0} \text{ReLU}(w_1 \epsilon_1 + b_1) - \mathbbm{1}_{\epsilon_3 < 0} \text{ReLU}(w_2 \epsilon_2 + b_2)$. We further tested a small multi-layer perceptron (MLP) model of size [3, 20, 20, 1] which warps $\bm{\epsilon}$ to generate the samples. 
The Jensen-Shannon divergence is adversarially estimated with an MLP of the same architecture. The MCMC sampler is Langevin dynamics with rejection (MALA \cite{roberts:mala1998}), and in training 10 parallel chains are used.
The fitted approximations are visualised in Figure \ref{fig:mog_toy}. Both models cover both modes, however the variational program performs better in terms of estimating the variance of each Gaussian component. Thus an intelligent design of the $q$ network can achieve better performance with much fewer number of parameters.

We empirically investigate the effect of the chain length $T$ on the approximation quality using the MLP approximation. Time steps $T=1, 5, 10$ with step-sizes $\eta = 0.1, 0.02, 0.01$ are tested (each repeating 10 times), where by making $T\eta = 1.0$ a constant, the particles approximately move equal distances during MCMC transitions. In Figure \ref{fig:mog_toy} we shown the Kernel Stein Discrepancy (KSD \citep{liu:ksd2016}) as a metric of approximation error. With small chain length the student quickly learns the posterior shape, but running more MCMC transitions results in better approximation accuracy. A potential way to balance the time-accuracy trade-off is to initially use short Markov chains initially for AMC, but to lengthen them as AMC converges. This strategy has been widely applied to contrastive-divergence like methods \cite{hinton:cd2002,salakhutdinov:ais2008}. We leave the exploration of this idea to future work.

\begin{figure}[t]
 \vspace{-0.1in}
 \centering
 \includegraphics[width=0.9\linewidth]{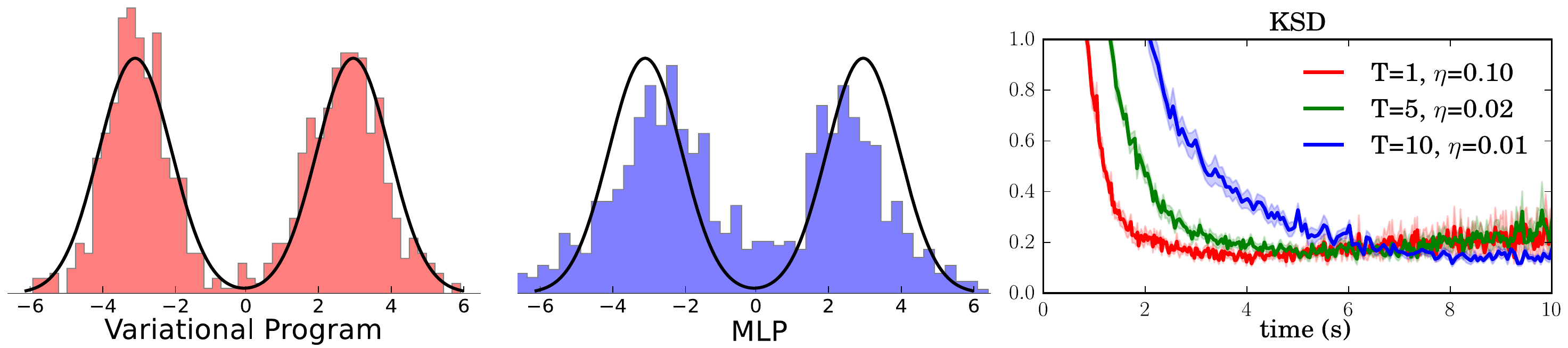}
 \caption{Approximating a Gaussian mixture density.}
 \label{fig:mog_toy}
 \vspace{-0.1in}
\end{figure}

\subsection{Bayesian neural network classification}

Next we apply amortised MCMC to classification using Bayesian neural networks. Here the random variable $\bm{z}$ denotes the neural network weights which could be thousands of dimensions. In such high dimensions a large number of MCMC samples are typically required for good approximation. Instead we consider AMC as an alternative, which allows us to use much fewer samples during training as it decouples the samples from evaluation, leading to massive savings of time and storage. To validate this, we take from the UCI repository \citep{lichman:uci2013} 7 binary classification datasets and train a 50-unit Bayesian neural network. For comparison we test the mean-field Gaussian approximation trained by VI with 10 samples, and MALA with 100 particles simulated and stored. The approximated posterior for AMC is constructed by first taking a mean-field Gaussian approximation, then normalising the $K=10$ samples by their empirical mean and variance. This wild approximation is trained with the energy matching objective with $\beta = 2$, where we also use MALA as the dynamics with $T=1$.

The results are reported in Table \ref{tab:bnn_results}. For test log-likelihood MALA is generally better than VI as expected. More importantly, AMC performs similarly, and for some datasets even better, than MALA. VI returns the best test error metrics on three of the datasets, with AMC and MALA topped for the rest. In order to demonstrate the speed-accuracy trade-off we visualise in Figure \ref{fig:bnn_time} the (negative) test-LL and error as a function of CPU time. In this case we further test MALA with 10 samples, which is much faster than the 100 samples version, but it pays the price of slightly worse results. However AMC achieves better accuracy in a smaller amount of time, and in general out-performs the 10-sample MALA. These observations show that AMC with energy matching can be used to train Bayesian neural works and achieves a balance between computational complexity and performance.

\begin{figure}[t]
\centering
\captionof{table}{BNN classification experiment results}
\label{tab:bnn_results}
\resizebox{\textwidth}{!}{%
\begin{tabular}{l@{\ica}|r@{$\pm$}l@{\ica}r@{$\pm$}l@{\ica}r@{$\pm$}l@{\ica}|r@{$\pm$}l@{\ica}r@{$\pm$}l@{\ica}r@{$\pm$}l@{\ica}}
\hline
&\multicolumn{6}{c}{Average Test Log-likelihood}&\multicolumn{6}{c}{Average Test Error}\\
\bf{Dataset}&\multicolumn{2}{c}{\bf{VI+Gaussian}}&\multicolumn{2}{c}{\bf{AMC}}&\multicolumn{2}{c|}{\bf{MALA}}&\multicolumn{2}{c}{\bf{VI+Gaussian}}&\multicolumn{2}{c}{\bf{AMC}}&\multicolumn{2}{c}{\bf{MALA}}\\
\hline
australian&\textbf{-0.633}&\textbf{0.008}&-0.666&0.015&-0.636&0.010&\textbf{0.315}&\textbf{0.014}&0.360&0.011&0.344&0.013\\
breast&-0.096&0.010&\textbf{-0.091}&\textbf{0.008}&-0.094&0.010&\textbf{0.029}&\textbf{0.003}&0.030&0.004&0.037&0.004\\
colon&-0.799&0.246&-0.491&0.104&\textbf{-0.420}&\textbf{0.027}&\textbf{0.125}&\textbf{0.023}&0.167&0.031&0.167&0.026\\
crabs&-0.221&0.012&\textbf{-0.115}&\textbf{0.011}&-0.179&0.010&0.070&0.013&0.040&0.010&\textbf{0.035}&\textbf{0.010}\\
ionosphere&-0.241&0.019&-0.230&0.031&\textbf{-0.179}&\textbf{0.013}&0.099&0.011&0.077&0.013&\textbf{0.064}&\textbf{0.010}\\
pima&-0.503&0.010&-0.506&0.013&\textbf{-0.498}&\textbf{0.012}&0.262&0.008&\textbf{0.245}&\textbf{0.008}&0.247&0.008\\
sonar&-0.389&0.025&\textbf{-0.347}&\textbf{0.030}&-0.366&0.021&0.179&0.014&\textbf{0.150}&\textbf{0.016}&0.171&0.020\\
\hline
\end{tabular}%
\vspace{0.1in}
}

\centering
\includegraphics[width=0.85\linewidth]{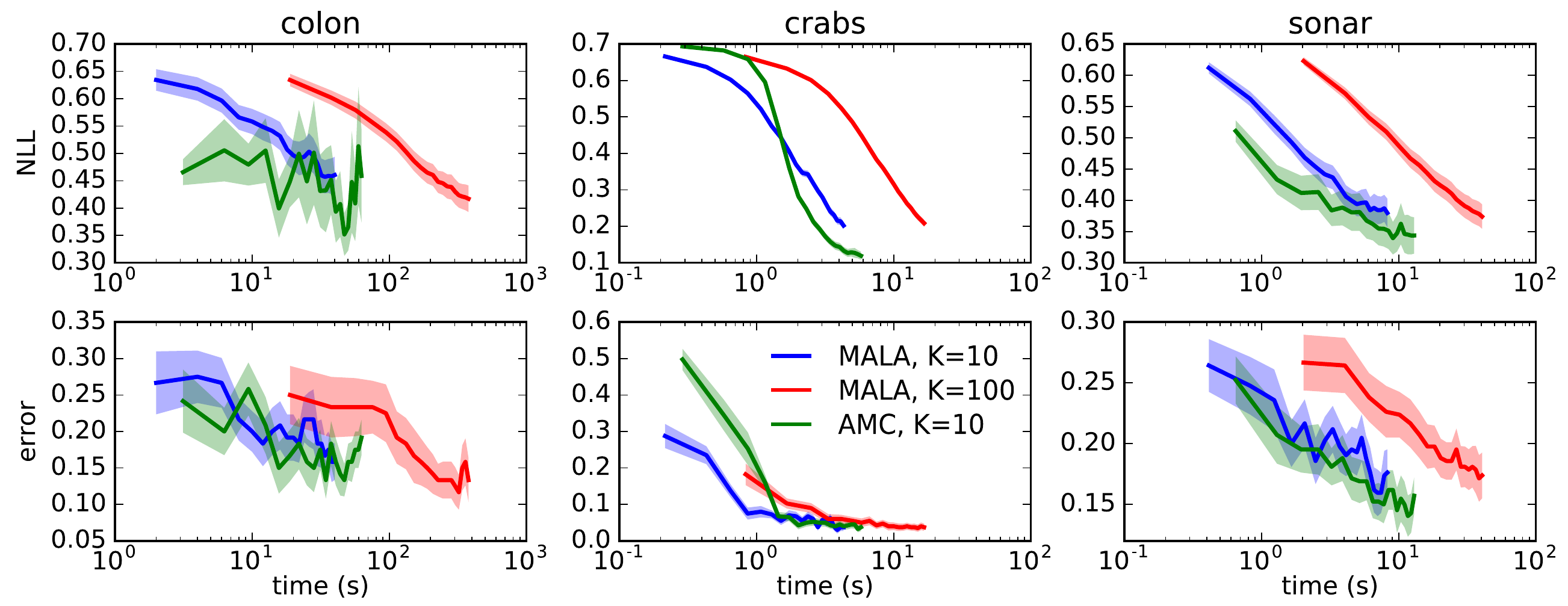}
\label{fig:bnn_time}
\captionof{figure}{Running time/performance trade-off. Time reported in log scale. See main text for details.}
\vspace{-0.1in}
\end{figure}

\subsection{Deep generative models}
The final experiment considers training deep generative models on the dynamically binarised MNIST dataset, containing 60,000 training datapoints and 10,000 test images \cite{burda:iwae2016}. For benchmark a convolutional VAE with $\text{dim}(\bm{z}) = 32$ latent variables is tested. The Gaussian encoder consists of a convolutional network with $5 \times 5$ filters, stride 2 and [16, 32, 32] feature maps, followed by a fully connected network of size [500, 32 $\times$ 2]. The generative model has a symmetric architecture but with stride convolution replaced by deconvolution layers. This generative model architecture is fixed for all the tests. We also test AMC with inclusive KL divergence on Gaussian encoders, and compare to the naive approach which trains the encoder by maximising variational lower-bound (MCMC-VI). 

We construct two non-Gaussian encoders for AVB and AMC (see appendix). Both encoders start from a CNN followed by a reshaping operation. Then the first model (CNN-G) splits the CNN's output vector into $[\bm{h}(\bm{x}), \bm{\mu}(\bm{x}), \log \bm{\sigma}(\bm{x})]$, samples a Gaussian noise $\bm{\epsilon} \sim \mathcal{N}(\bm{\epsilon}; \bm{\mu}(\bm{x}), \text{diag}[\bm{\sigma}^2(\bm{x})])$, and feeds $[\bm{h}(\bm{x}), \bm{\epsilon}]$ to an MLP for generating $\bm{z}$.
The second encoder (CNN-B) simply applies multiplicative Bernoulli noise with dropout rate 0.5 to the CNN output vector, and uses the same MLP architecture as CNN-G. The discriminator consists of a CNN acting on the input image $\bm{x}$ only, and an MLP acting on both $\bm{z}$ and the CNN output. Batch normalisation is applied to non-Gaussian encoders and the discriminator. The learning rate of Adam \cite{kingma:adam2015} is tuned on the last 5000 training images. Rejection steps are not used for Langevin dynamics as we found it slows down the learning.

\begin{figure}[t]
\begin{minipage}{0.43\linewidth}
\includegraphics[width=0.9\linewidth]{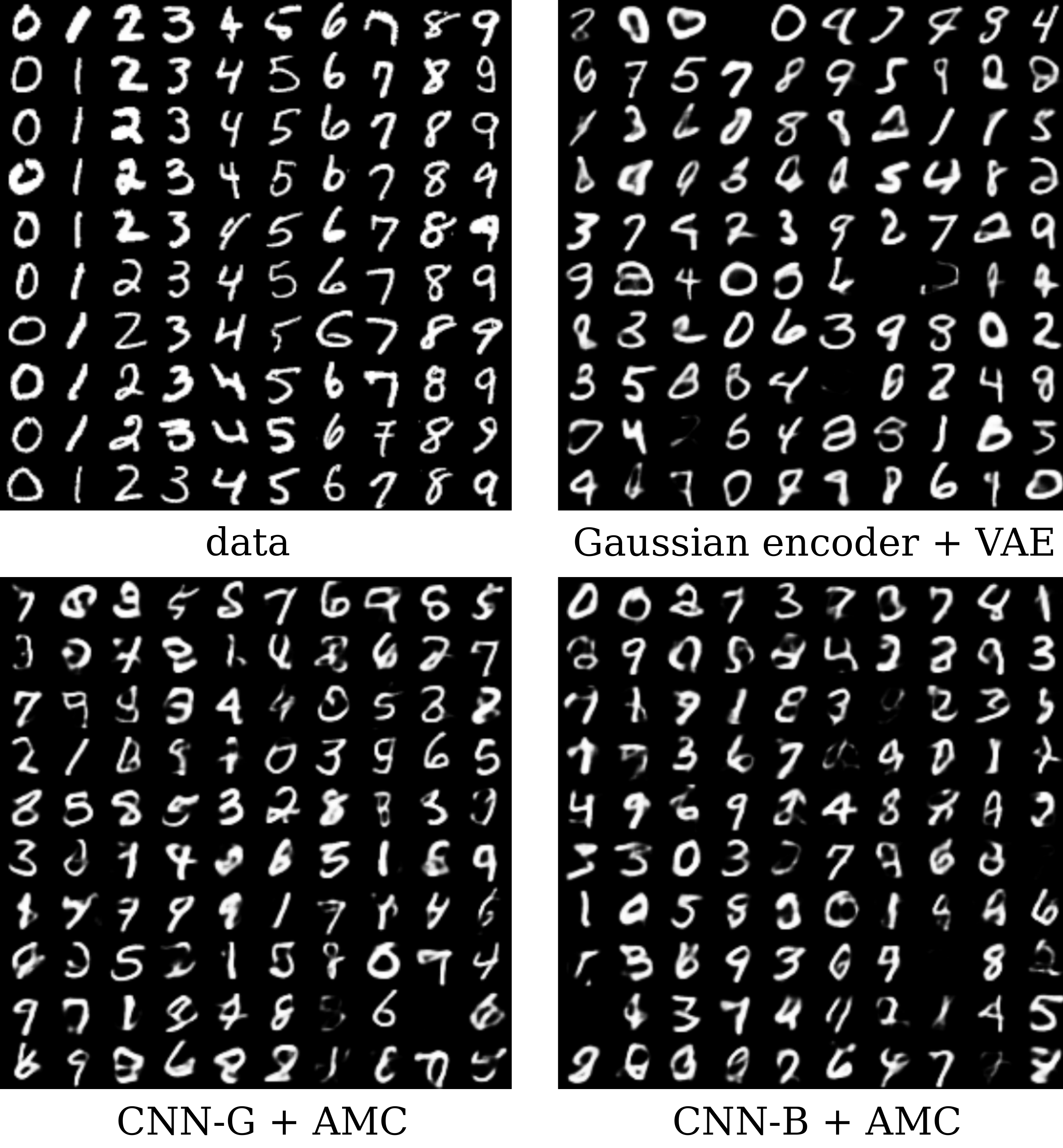}
\captionof{figure}{Generated images.}
\label{fig:samples}
\end{minipage}
\begin{minipage}{0.55\linewidth}
\centering
\includegraphics[width=0.75\linewidth]{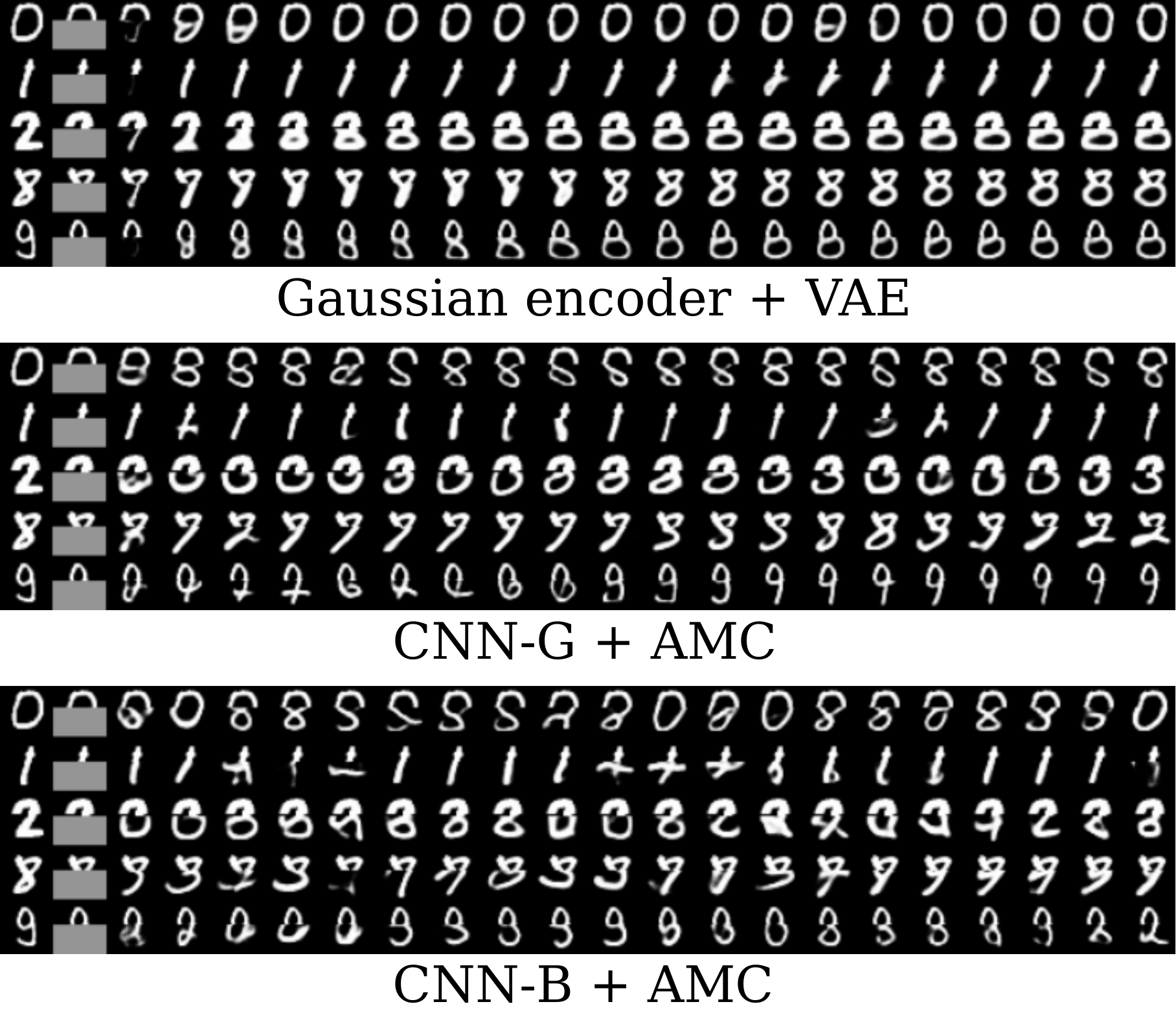}
\captionof{figure}{Imputation results.}
\label{fig:imputation}
\captionof{table}{ Label entropy on nearest neighbours.}
\centering
\label{tab:entropy_imputation}
\scalebox{0.9}{
\begin{tabular}{l@{\ica}r@{$\pm$}l@{\ica}r@{$\pm$}l@{\ica}r@{$\pm$}l@{\ica}r@{$\pm$}l@{\ica}r@{$\pm$}l@{\ica}r@{$\pm$}l@{\ica}r@{$\pm$}}\hline 
\bf{Dataset}&\multicolumn{2}{c}{\bf{ VAE }}&\multicolumn{2}{c}{\bf{ CNN-G }}&\multicolumn{2}{c}{\bf{ CNN-B }} \\ \hline 
Entropy&0.411&0.039&0.701&0.048&0.933&0.049\\ 
$l_1$-norm&0.061&0.000&0.059&0.000&0.064&0.000\\  
\hline 
\vspace{-0.1in}
\end{tabular} 
}
\end{minipage}
\centering
\captionof{table}{ Average Test log-likelihood (LL/nats). }
\label{tab:vae_ll}
\scalebox{0.9}{
\begin{tabular}{lccccc}
\hline
\bf{Encoder} &\bf{Method} & IW-LL & IW-ESS & HAIS-LL & HAIS-ESS \\
\hline
Gaussian & VAE & \bf{-81.31} & \bf{104.11} & -80.64 & 91.59 \\
         & MCMC-VI, $T=5$, $\eta=0.2$ & -90.06 & 110.58 & -89.79 & 85.63 \\
         & AMC, $T=5$, $\eta=0.2$ & -90.71 & 49.02 & -89.64 & 87.93 \\
\hline
CNN-G & AMC, $T=5$, $\eta = 0.2$ & -90.84 & 31.60 & -89.35 & 87.49 \\
      & AMC, $T=50$, $\eta = 0.02$ & -83.30 & 6.84 & \bf{-78.23} & \bf{77.78} \\
      & AVB & -94.97 & 11.30 & -85.92 & 57.21 \\
\hline
CNN-B & AMC, $T=5$, $\eta = 0.2$ & -90.75 & 34.17 & -89.42 & 88.10\\
      & AMC, $T=50$, $\eta = 0.02$ & -83.62 & 8.88 & -80.03 & 80.71\\
      & AVB & -89.47 & 8.98 & -82.66 & 76.90 \\
\hline
N/A & persistent MCMC, $T=50$, $\eta=0.02$ & -84.43 & 9.14 & -78.88 & 77.29 \\
\hline
\end{tabular}
}

\vspace{-0.1in}
\end{figure}

\paragraph{Test Log-likelihood Results}
We report the test log-likelihood (LL) results in Table \ref{tab:vae_ll}. We first follow \cite{burda:iwae2016} to estimate the test log-likelihood with importance sampling using 5000 samples, and for the non-Gaussian encoders we train another Gaussian encoder with VI as the proposal distribution.
VAE appears to be the best method by this metric, and the best AMC model is about 2nats behind. However, effective sample size results (IW-ESS) show that the estimation results for AMC and AVB are unreliable.
Indeed approximate MLE using the variational lower-bound biases the generative network towards the model whose exact posterior is close to the inference network $q$ \cite{turner:two_problems2011}. As the MCMC-guided approximate MLE trains the generative model with $q_T$ (which could be highly non-Gaussian), the VI-fitted Gaussian proposal, employed in the IWAE, can under-estimates the true test log-likelihood by a significant amount.
To verify the conjecture, we estimate the test-LL again but using Hamiltonian annealed importance sampling (HAIS) as suggested by \citep{wu:quantitative2016}. We randomly select 1,000 test images for evaluation, and run HAIS with 10,000 intermediate steps and 100 parallel chains. Estimation results demonstrate that IW-LL significantly under-estimates the test LL for models trained by wild approximations. In this metric the CNN-G model with $T=50$ performs the best, which is significantly better than benchmark VAE. To demonstrate the improvement brought by the wild approximation we further train a generative model with ``persistent MCMC'', by initialising the Markov chain with previous samples and ignoring the posterior changes. The HAIS-LL results shows that out best model is about $0.6$nats better, which is a significant improvement on MNIST.

Although test log-likelihood is an important measures of model quality, \citet{theis:eval2016} has shown that this metric is often largely orthogonal to one that tests visual fidelity when the data is high dimensional. We visualise the generated images in Figure \ref{fig:samples}, and we see that AMC trained models produce samples of similar quality to VAE samples. 

\paragraph{Missing Data Imputation}
We also consider missing data imputation with pixels missing from contiguous sections of the image, i.e.~not at random. 
We follow \cite{rezende:vae2014} using an approximate Gibbs sampling procedure for imputation. With observed and missing pixels denoted as $\bm{x}_o$ and $\bm{x}_m$, the approximate sampling procedure iteratively applies the following transition steps: (1) sample $\bm{z} \sim q(\bm{z}|\bm{x}_o, \bm{x}_m)$ given the imputation $\bm{x}_m$, and (2) sample $\bm{x}^* \sim p(\bm{x}^*|\bm{z}, \bm{\theta})$ and set $\bm{x}_m \leftarrow \bm{x}_m^*$.
In other words, the encoder $q(\bm{z}|\bm{x})$ is used to approximately generate samples from the exact posterior. As ambiguity exists, the exact conditional distribution $p(\bm{x}_m|\bm{x}_o, \bm{\theta})$ is expected to be multi-modal.

Figure \ref{fig:imputation} visualises the imputed images, where starting from the third column the remaining ones show samples for every 2 Gibbs steps. Clearly the approximate Gibbs sampling for VAE is trapped in local modes due to the uni-modal approximation $q$ to the exact posterior. On the other hand, models trained with AMC return diverse imputations that explore the space of compatible images quickly, for instance, CNN-B returns imputations for digit ``9'' with answers 2, 0, 9, 3 and 8. To quantify this, we simulate the approximate Gibbs sampling for $T=100$ steps on the first 100 test images (10 for each class), find the nearest neighbour (in $l_1$-norm) of the imputations in the training dataset, and compute the entropy on the label distribution over these training images. The entropy values and the average $l_1$-distance to the nearest neighbours (divided by the number of pixels $\text{dim}(\bm{x}) = 784$) are presented in Table \ref{tab:entropy_imputation}. These metrics indicate that AMC trained models generate more diverse imputations compared to VAE, yet these imputed images are about the same distance from the training data.

\section{Conclusion and Future Work}
We have proposed an MCMC amortisation algorithm which deploys a student-teacher framework to learn the approximate posterior. By using adversarially estimated divergences and energy matching, the algorithm allows approximations of arbitrary form to be learned. Experiments on Bayesian neural network classification showed that the amortisation method can be used as an alternative to MCMC when computational resources are limited. Application to training deep generative networks returned models that could generate high quality images, and the learned approximation captured multi-modality in generation.
Future work should cover both theoretical and practical directions. Convergence of the amortisation algorithm will be studied. Flexible approximations will be designed to capture multi-modality. Efficient MCMC samplers should be applied to speed-up the fitting process. Practical algorithms for approximating discrete distributions will be further developed. 



\small
\bibliography{references}
\bibliographystyle{plain}
\newpage
\appendix
\section{Examples of wild approximations}
We provide several examples of wild approximations in the following.

\begin{example}
\label{ex:reparam}
(Deterministic transform) 
\emph{
Sampling $\bm{z} \sim q(\bm{z}|\bm{x})$ is defined by first sampling some random noise $\bm{\epsilon} \sim p(\bm{\epsilon})$, then transforming it with a deterministic mapping $\bm{z} = \bm{f}(\bm{\epsilon}, \bm{x})$, which might be defined by a (deep) neural network. These distributions are also called \emph{variational programs} in \citep{ranganath:ovi2016}, or \emph{implicit models} in the generative model context \cite{mohamed:gan2016}. An important note here is that $\bm{f}$ might not be invertible, which differs from the invertible transform techniques discussed in \cite{rezende:flow2015,kingma:iaf2016}.}
\end{example}
\begin{example}
(Truncated Markov chain)
\emph{
Here the samples $\bm{z} \sim q(\bm{z}|\bm{x})$ are defined by finite-step transitions of a Markov chain. Examples include Gibbs sampling in contrastive divergence \citep{hinton:cd2002}, or finite-step simulation of an SG-MCMC algorithm such as SGLD \citep{welling:sgld2011}. It has been shown in \citep{maclaurin:sgd2016, mandt:sgd2016} that the trajectory of SGD can be viewed as a variational approximation to the exact posterior. In these examples the variational parameters are the parameters of the transition kernel, e.g.~step-sizes and/or preconditioning matrices. Related work includes \citet{salimans:mcmcvi2015} which integrates MCMC into VI objective. These methods are more expensive as they require evaluations of $\nabla_{\bm{z}} \log p(\bm{x}, \bm{z}| \bm{\theta})$, but they can be much cheaper than sampling from the exact posterior.
}
\end{example}

\begin{example}
(Stochastic regularisation techniques (SRT))
\emph{
SRT for deep neural network training, e.g.~dropout \citep{srivastava:dropout2014} and related variants \citep{wan:dropconnect2013, singh:swapout2016}, have been re-interpreted as a variational inference method for network weights $\bm{z} = \{ \bm{W} \}$ \citep{gal:dropout2016, gal:uncertainty2016}. The variational parameters $\bm{\phi} = \{ \bm{M} \}$ are the weight matrices of a Bayesian neural network without SRT. The output is computed as $\bm{h} = \sigma((\bm{\epsilon} \odot \bm{x}) \bm{M})$, with $\sigma(\cdot)$ the activation function and $\bm{\epsilon}$ some randomness. This is equivalent to setting $\bm{W} = \text{diag}(\bm{\epsilon}) \bm{M}$, making SRT a special case of example \ref{ex:reparam}. Fast evaluation of $q(\bm{z}|\bm{x})$ during training is intractable as different noise values $\bm{\epsilon}$ are sampled for different inputs in a mini-batch. This means multiple sets of weights are processed if we were to evaluate the density, which has prohibitive costs especially when the network is wide and deep.
}
\end{example}

\section{Approximate MLE with MCMC: mathematical details}
In the main text we stated that $\mathrm{D}_{\text{KL}}[q||p] \geq \mathrm{D}_{\text{KL}}[q_T||p]$ if $q$ is the stationary distribution of the kernel $\mathcal{K}$. This is a direct result of the following lemma, and we provide a proof from \cite{cover:itbook1991} for completeness.

\begin{lemma}\label{lem:dd}
\cite{cover:itbook1991}
Let $q$ and $r$ be two distributions for $\bm{z}_0$. Let $q_t$ and $r_t$ be the corresponded distributions of state $\bm{z}_t$ at time $t$, induced by the transition kernel $\mathcal{K}$. Then $\mathrm{D}_{\text{KL}}[q_t||r_t] \geq \mathrm{D}_{\text{KL}}[q_{t+1}||r_{t+1}]$ for all $t \geq 0$.
\end{lemma}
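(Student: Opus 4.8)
The statement is the classical data-processing inequality for KL divergence under a Markov kernel, so the plan is to prove the one-step contraction $\mathrm{D}_{\text{KL}}[q_1 \| r_1] \le \mathrm{D}_{\text{KL}}[q_0 \| r_0]$ and then iterate. The key device is the chain rule for KL divergence applied to the joint distributions of the pair $(\bm z_0, \bm z_1)$. Concretely, let $Q(\bm z_0, \bm z_1) = q_0(\bm z_0)\,\mathcal K(\bm z_1 \mid \bm z_0)$ and $R(\bm z_0, \bm z_1) = r_0(\bm z_0)\,\mathcal K(\bm z_1 \mid \bm z_0)$ be the joint laws induced by running one Markov step from $q_0$ and from $r_0$ respectively, using the \emph{same} kernel $\mathcal K$. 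First I would factor $\mathrm{D}_{\text{KL}}[Q \| R]$ in two ways: conditioning on $\bm z_0$ first gives
\begin{align}
\mathrm{D}_{\text{KL}}[Q \| R] = \mathrm{D}_{\text{KL}}[q_0 \| r_0] + \mathbb{E}_{q_0(\bm z_0)}\big[\mathrm{D}_{\text{KL}}[\mathcal K(\cdot \mid \bm z_0) \,\|\, \mathcal K(\cdot \mid \bm z_0)]\big] = \mathrm{D}_{\text{KL}}[q_0 \| r_0],
\end{align}
since the conditional term vanishes (the two conditionals are identical).

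Next I would factor the same quantity by conditioning on $\bm z_1$ first. Writing $Q(\bm z_0, \bm z_1) = q_1(\bm z_1)\, Q(\bm z_0 \mid \bm z_1)$ and similarly for $R$, the chain rule gives
\begin{align}
\mathrm{D}_{\text{KL}}[Q \| R] = \mathrm{D}_{\text{KL}}[q_1 \| r_1] + \mathbb{E}_{q_1(\bm z_1)}\big[\mathrm{D}_{\text{KL}}[Q(\cdot \mid \bm z_1) \,\|\, R(\cdot \mid \bm z_1)]\big].
\end{align}
The second term is a KL divergence, hence nonnegative, so $\mathrm{D}_{\text{KL}}[Q \| R] \ge \mathrm{D}_{\text{KL}}[q_1 \| r_1]$. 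Combining the two displays yields $\mathrm{D}_{\text{KL}}[q_0 \| r_0] \ge \mathrm{D}_{\text{KL}}[q_1 \| r_1]$, which is exactly the $t \to t+1$ step. Shifting the index (applying the same argument to the pair $(\bm z_t, \bm z_{t+1})$, whose joint law is $q_t(\bm z_t)\,\mathcal K(\bm z_{t+1}\mid \bm z_t)$) gives the claim for all $t \ge 0$, and the lemma follows.

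The only real subtlety — and the step I would be most careful about — is justifying the chain rule for KL divergence in the general (possibly continuous) state space, i.e.\ ensuring the backward conditionals $Q(\bm z_0 \mid \bm z_1)$ and $R(\bm z_0 \mid \bm z_1)$ are well defined and that $Q \ll R$ so all the divergences are meaningful; if $\mathrm{D}_{\text{KL}}[q_0 \| r_0] = \infty$ the inequality is trivial, so one may assume $q_0 \ll r_0$, which propagates to $Q \ll R$ and then to $q_1 \ll r_1$. Everything else is a routine application of Fubini and the definition of conditional density. I would state the chain-rule identity as a standalone line, cite \cite{cover:itbook1991} for it, and then the two-line computation above closes the proof.
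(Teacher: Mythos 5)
Your proof is correct and follows essentially the same route as the paper's: both compute the KL divergence between the joint laws $q_t(\bm{z}_t)\mathcal{K}(\bm{z}_{t+1}|\bm{z}_t)$ and $r_t(\bm{z}_t)\mathcal{K}(\bm{z}_{t+1}|\bm{z}_t)$ in two ways via the chain rule, noting that the forward-conditional term vanishes and the backward-conditional term is nonnegative. Your added care about absolute continuity and the infinite-KL case is a reasonable refinement the paper leaves implicit, but the core argument is identical.
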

\begin{proof}
\begin{equation*}
\begin{aligned}
\mathrm{D}_{\text{KL}}[q_t||r_t] &= \mathbb{E}_{q_t}\left[ \log \frac{q_t(\bm{z}_t)}{r_t(\bm{z_t})} \right] \\
&= \mathbb{E}_{q_t(\bm{z}_t) \mathcal{K}(\bm{z}_{t+1}|\bm{z}_t) }\left[ \log \frac{q_t(\bm{z}_t) \mathcal{K}(\bm{z}_{t+1}|\bm{z}_t) }{r_t(\bm{z}_t) \mathcal{K}(\bm{z}_{t+1}|\bm{z}_t)} \right] \\
&= \mathbb{E}_{q_{t+1}(\bm{z}_{t+1}) q_{t+1}(\bm{z}_t|\bm{z}_{t+1})}\left[ \log \frac{q_{t+1}(\bm{z}_{t+1}) q(\bm{z}_{t}|\bm{z}_{t+1}) }{r_{t+1}(\bm{z}_{t+1}) r(\bm{z}_{t}|\bm{z}_{t+1})} \right] \\
&= \mathrm{D}_{\text{KL}}[q_{t+1}||r_{t+1}] + \mathbb{E}_{q_{t+1}} \mathrm{D}_{\text{KL}}[q_{t+1}(\bm{z}_t | \bm{z}_{t+1}) || r_{t+1}(\bm{z}_t | \bm{z}_{t+1})].
\end{aligned}
\end{equation*}
\end{proof}
\vspace{-0.2in}

\newcommand{\ff}{{\bm\phi}}
\section{Energy Matching and Contrastive Divergence}
The energy matching method in Section~\ref{sec:thechoice} can also be roughly motivated by contrastive divergence \citep{hinton:cd2002} as follows. 
First define 
$$\Delta_{\text{CD}}[q_\ff ~||~p]  :=\mathrm D_\mathrm{KL}[q_\ff ~||~p] - \mathrm D_\mathrm{KL}[ q_{T} ||p],$$
where $q_{T} =  \mathcal K_T q_{\ff}$. 
From Lemma~\ref{lem:dd}, 
$\Delta_{\text{CD}}[q_\ff ~||~p]  \geq 0$ and can be used as a minimisation objective to fit an approximate posterior $q$. 
Expanding this contrastive divergence equation, we have:
\begin{equation}
\begin{aligned}
\Delta_{\text{CD}}[q_\ff ~||~p]  
	&= \mathbb{E}_{q_{T}(\bm{z}_T|\bm{x})} \left[ \log p(\bm{x}, \bm{z}_T)  \right] - \mathbb{E}_{q_\ff (\bm{z}|\bm{x})} \left[ \log p(\bm{x}, \bm{z})  \right]  ~+~ R , 
\end{aligned}
\label{eq:cd}
\end{equation}
where 
$$
R := \mathbb{E}_{q_{T}(\bm{z}|\bm{x}) \mathcal{K}_T(\bm{z}_T | \bm{z})} \left[ \log \frac{ q_{T}(\bm{z} | \bm{z}_T, \bm{x})}{\mathcal{K}_T(\bm{z}_T | \bm{z})} \right],
$$
and $q_{T}(\bm{z} | \bm{z}_T, \bm{x}) = q_{\ff}(\bm{z}|\bm{x}) \mathcal{K}_T(\bm{z}_T|\bm{z}) / q_{\ff T}(\bm{z}_T|\bm{x})$ is the ``posterior'' of $\bm{z}$ given the sample $\bm{z}_T$ after $T$-step MCMC. We ignore the residual term $R$ in when $T$ is small as now $\z$ and $\z_T$ are highly correlated, which motivates the energy matching objective (\ref{eq:energy_matching}). For large $T$ one can also use density ratio estimations methods to estimate the third term, but we leave the investigation to future work.

\section{Uncorrelated versus correlated Simulations}

In our algorithm, we generate $\{\bm z_T^k\}$ by simulating Markov transition for $T$ steps starting from $\{\bm z_0^k\} \sim q$, and use  these two samples 
$\{\bm z_0^k\}$ and $\{\bm z_T^k\}$ to estimate the divergence $D[q_T ||q]$.  
However, note that 
$\{\bm z_0^k\}$ and $\{\bm z_T^k\}$ is corrected, and this may introduce bias in the divergence estimation. A way to eliminate this correction is to simulate another $\{\bm  z_0^k\}$ independently and use it to replace the original sample. 
In practice, we find that the correlated/uncorrelated samples exhibit different behaviour during training. We consider the extreme case $K=1$ and small $T$ as an example. Using correlated samples would cause the teacher and the student's samples remaining in the same mode with high probability and thus easily confuse the discriminator and the student (generator) improves fast. On the other hand, if $\bm{z}_T$ is simulated from a Markov chain independent with $\bm{z}_0$, then these samples might be far away from each other (especially when $q$ is forced to be multi-modal), hence the discriminator can easily get saturated, providing no learning signal to the student. 
The above problem could potentially be solved using advanced techniques, e.g.~Wasserstein GAN \cite{arjovsky:wgan2017} which proposed minimising (an adversarial estimate of) Wasserstein-1 distance. In that case the gradient of $q$ won't saturate even when the two sets of samples are separated. But minimising Wasserstein distance would fit the $q$ distribution to the posterior in an ``optimal transport'' way, which presumably prefers moving the $q$ samples to their nearest modes in the exact posterior. 

\section{Experimental details}
\subsection{Bayesian neural networks: settings}
We use a one-hidden-layer neural network with 50 hidden units and ReLU activation. The mini-batch size is 32 and the learning rate is 0.001. The step-size of MALA is adaptively adjusted to achieve acceptance rate 0.99. The experiments are repeated on 20 random splits of the datasets. We time all the tests on a desktop machine with Intel(R) Core(TM) i7-4930K CPU @ 3.40GHz.

\subsection{Deep generative models: settings}
We construct two non-Gaussian encoders for tests of AVB and AMC. Both encoders start from a CNN with $3 \times 3$ filters, stride 2 and [32, 64, 128, 256] feature maps, followed by a reshaping operation. Then the first model (CNN-G) splits the output vector of the CNN into $[\bm{h}(\bm{x}), \bm{\mu}(\bm{x}), \log \bm{\sigma}(\bm{x})]$, samples a Gaussian noise variable of 32 dimensions $\bm{\epsilon} \sim \mathcal{N}(\bm{\epsilon}; \bm{\mu}(\bm{x}), \text{diag}[\bm{\sigma}^2(\bm{x})])$, and feeds $[\bm{h}(\bm{x}), \bm{\epsilon}]$ to an MLP which has hidden layer sizes [500, 500, 32].
The second encoder (CNN-B) simply applies multiplicative Bernoulli noise with dropout rate 0.5 to the CNN output vector, and uses the same MLP architecture as CNN-G. A discriminator is trained for AVB and AMC methods, with a CNN (of the same architecture as the encoders) acting on the input image $\bm{x}$ only, and a MLP of [32+1024, 500, 500, 1] layers. All networks use leaky ReLU activation functions with slope parameter 0.2, except for the output of the deconvolution network which uses sigmoid activation. Batch normalisation is applied to non-Gaussian encoders and the discriminator. The Adam optimiser \cite{kingma:adam2015} is used with learning rates tuned on the last 5000 training images. Rejection steps are not used in the Langevin dynamics as we found this slows down the learning.

\section{More visualisation results}
\begin{figure}[t]
 \centering
 \subfigure[Gaussian encoder + VAE]{
 \includegraphics[width=0.65\linewidth]{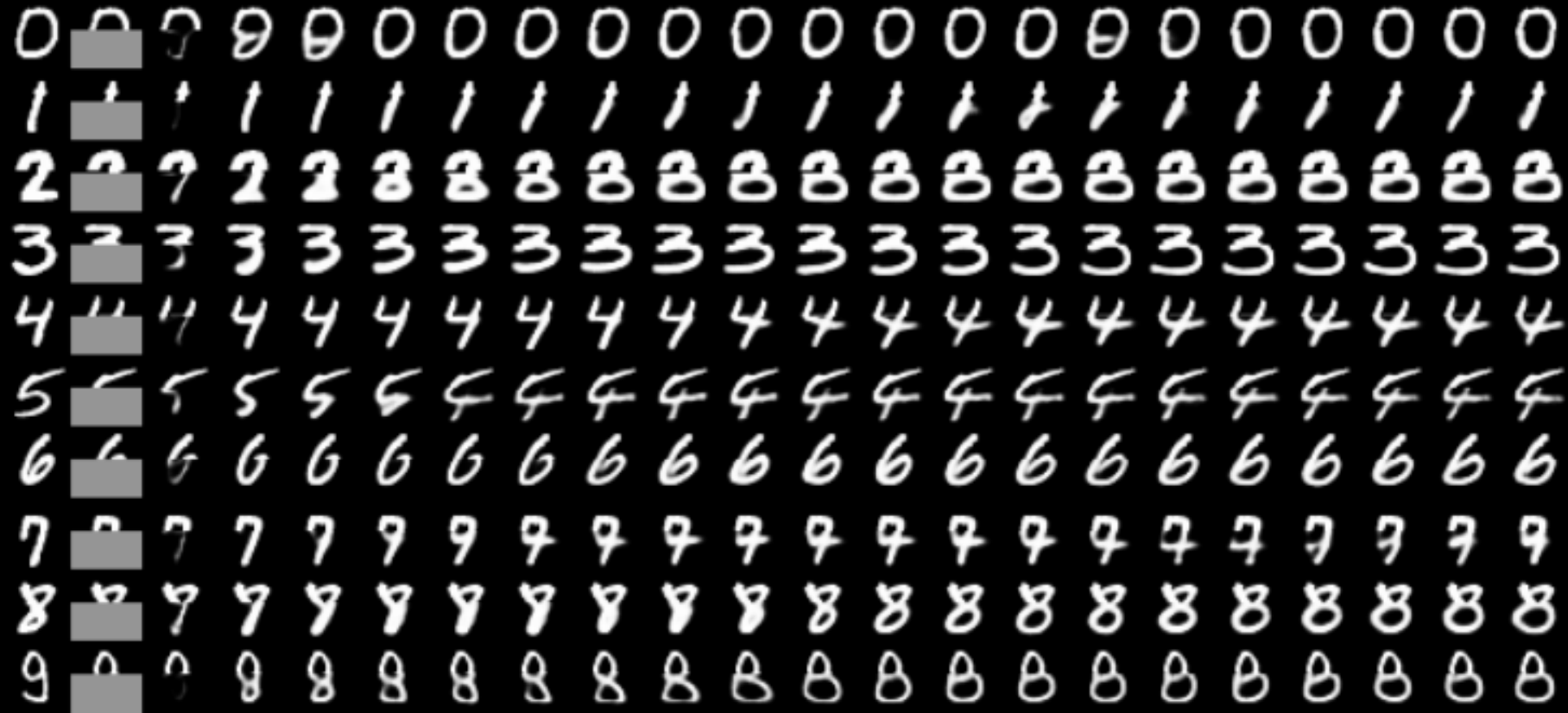}}
  \subfigure[CNN-G + AMC, $T=50$, $\eta = 0.02$]{
 \includegraphics[width=0.65\linewidth]{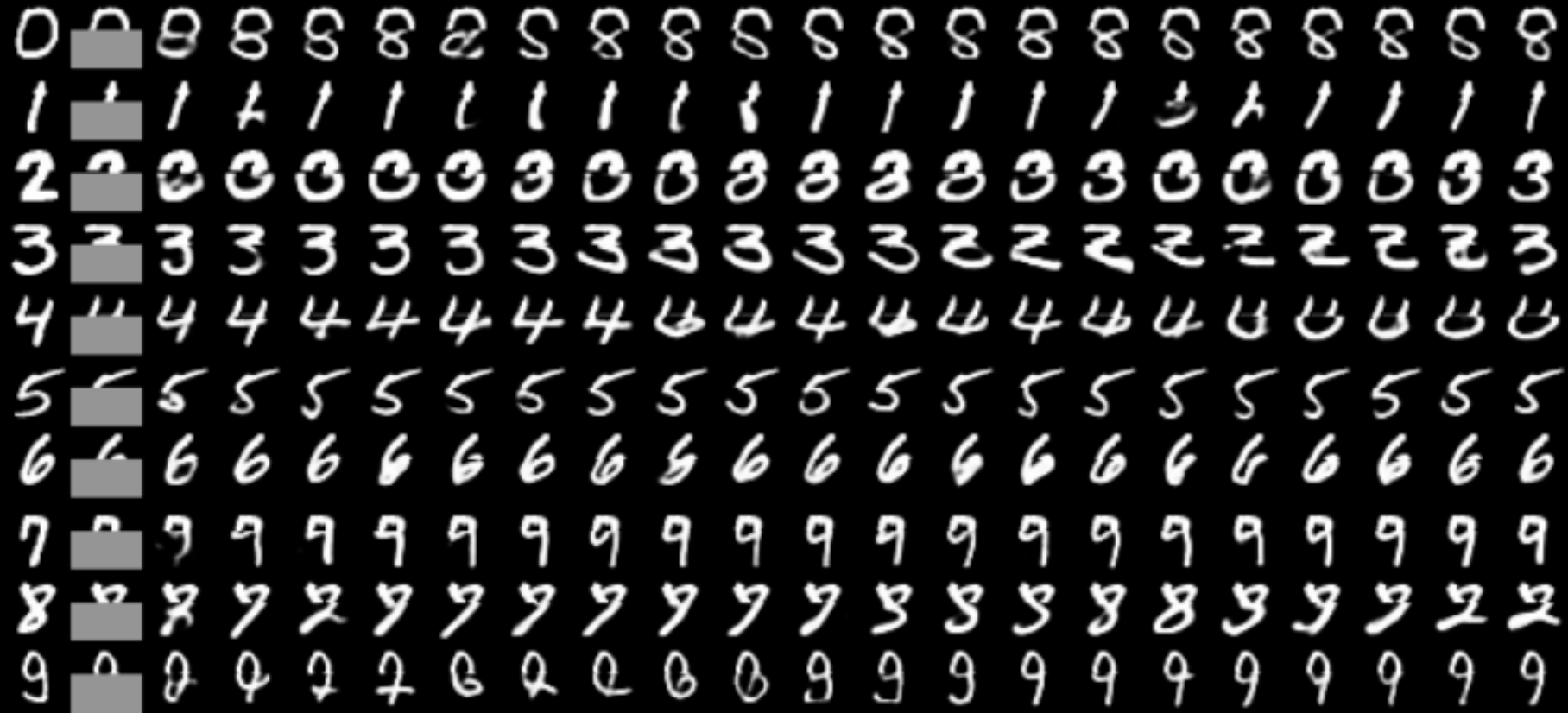}}
 \subfigure[CNN-B + AMC, $T=50$, $\eta = 0.02$]{
 \includegraphics[width=0.65\linewidth]{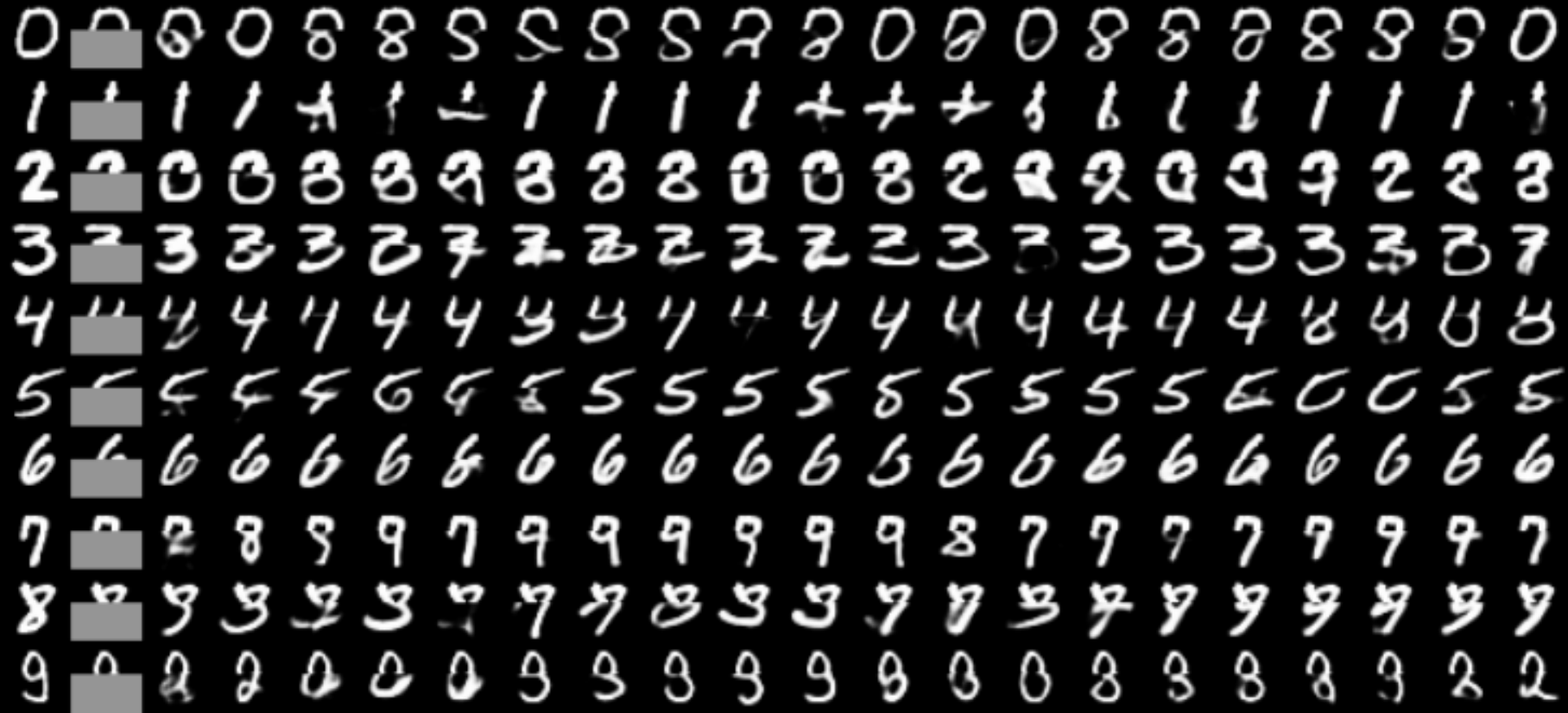}}
 \caption{Missing data imputation results. Removing the lower half pixels.}
\end{figure}
\begin{figure}[t]
 \centering
 \subfigure[Gaussian encoder + VAE]{
 \includegraphics[width=0.65\linewidth]{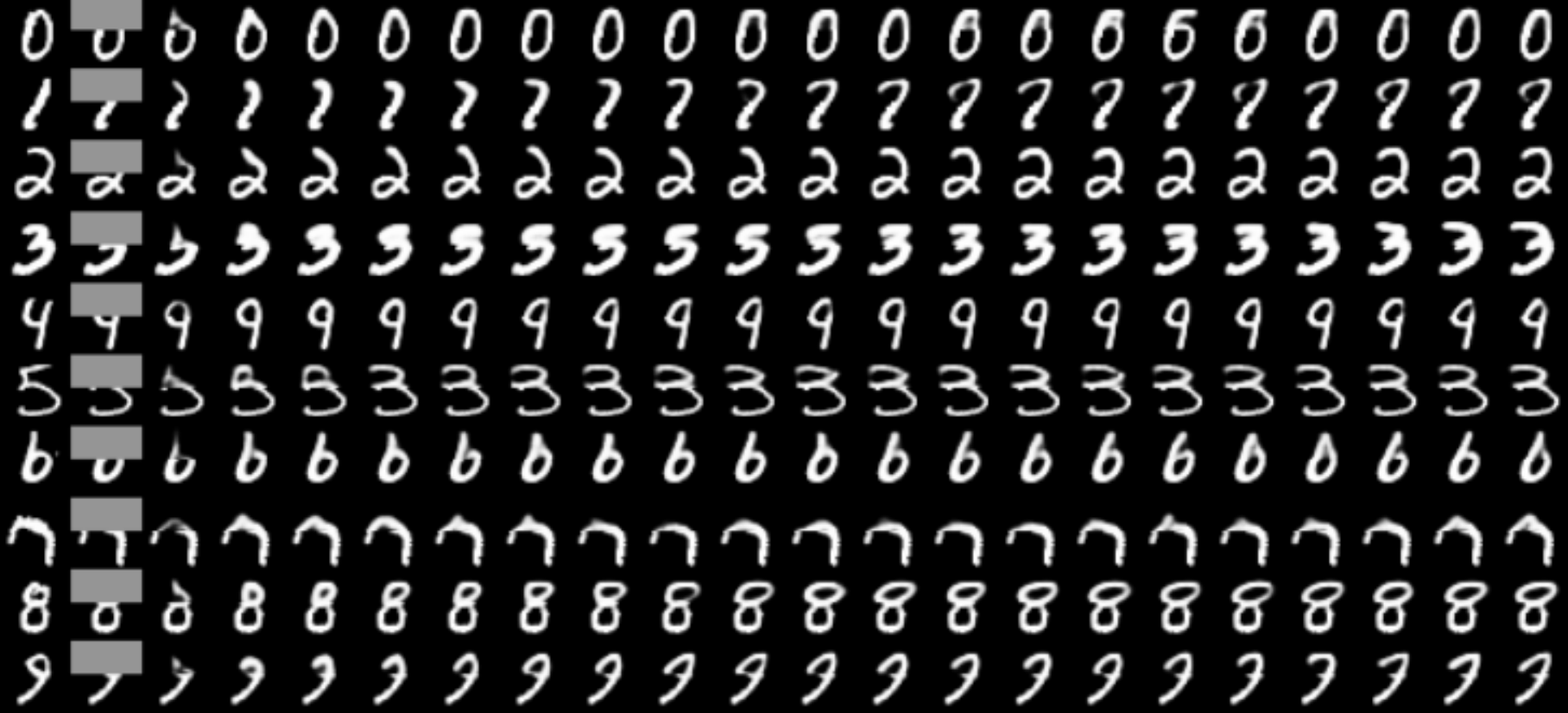}}
  \subfigure[CNN-G + AMC, $T=50$, $\eta = 0.02$]{
 \includegraphics[width=0.65\linewidth]{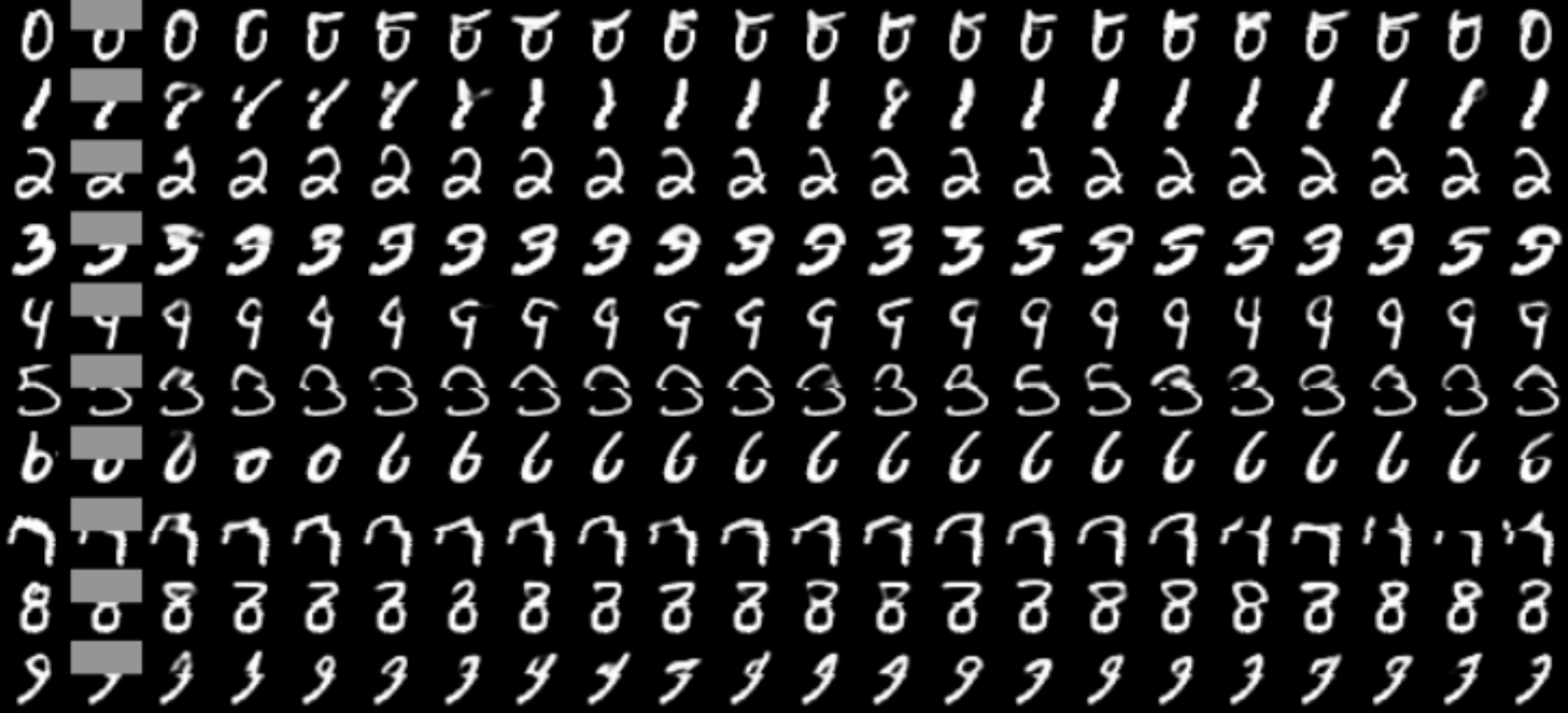}}
 \subfigure[CNN-B + AMC, $T=50$, $\eta = 0.02$]{
 \includegraphics[width=0.65\linewidth]{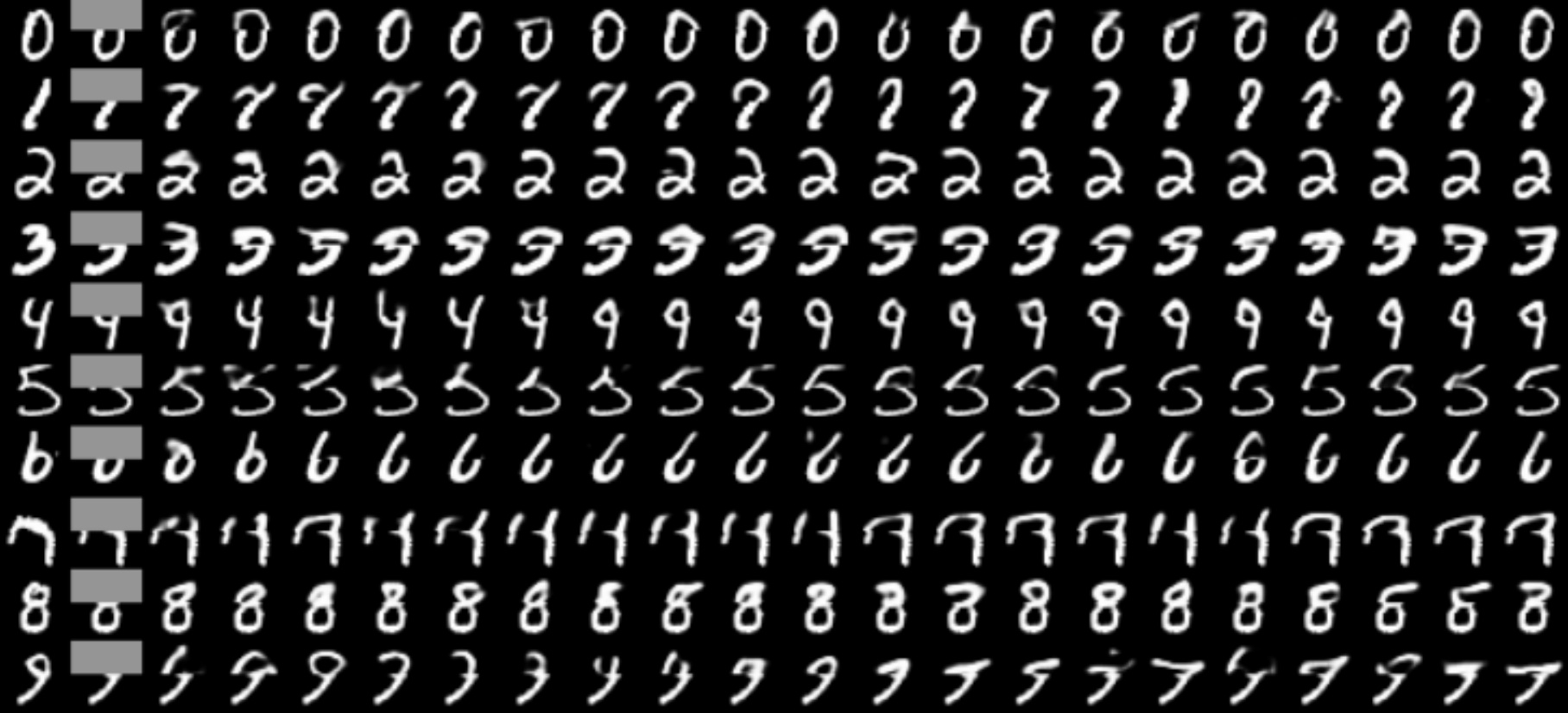}}
 \caption{Missing data imputation results. Removing the upper half pixels.}
\end{figure}
\begin{figure}[t]
 \centering
 \subfigure[Gaussian encoder + VAE]{
 \includegraphics[width=0.65\linewidth]{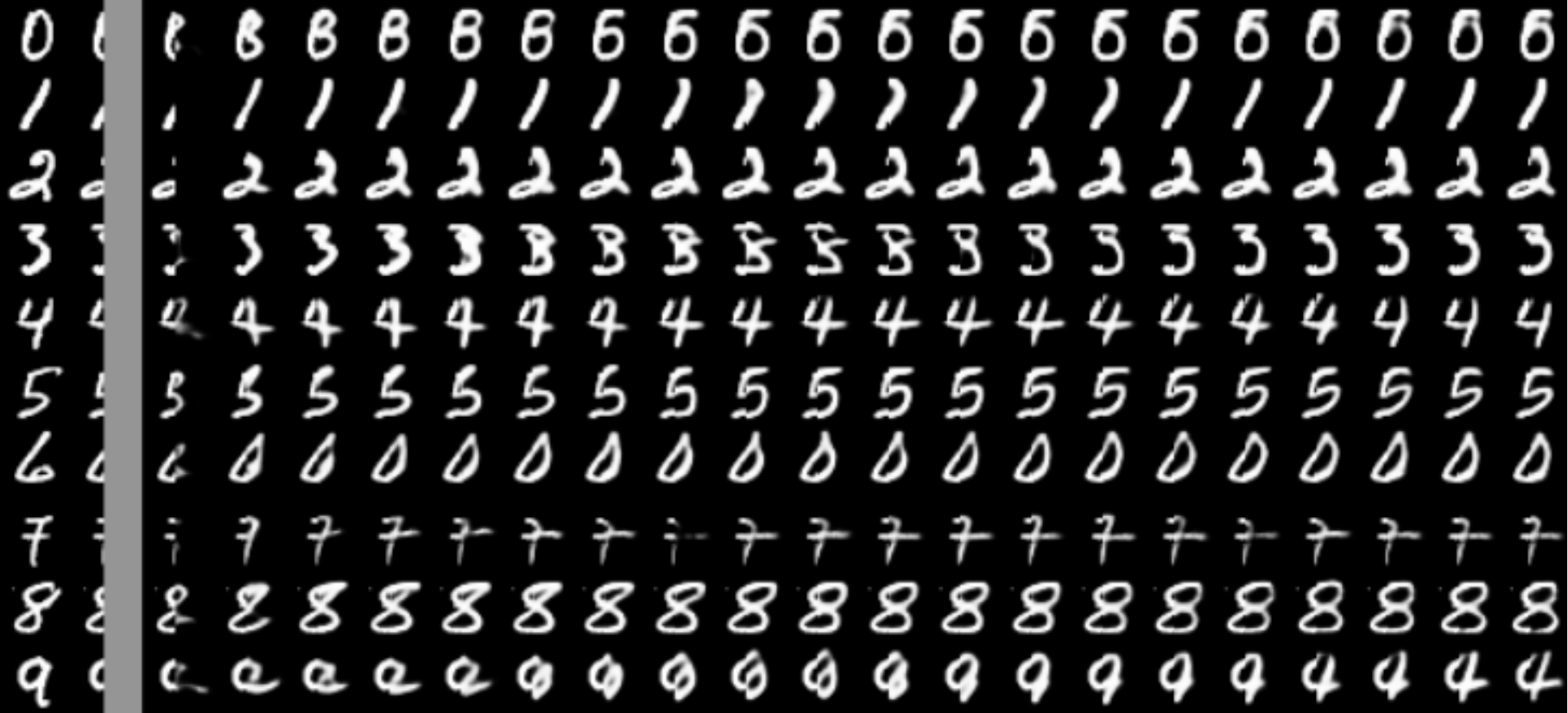}}
  \subfigure[CNN-G + AMC, $T=50$, $\eta = 0.02$]{
 \includegraphics[width=0.65\linewidth]{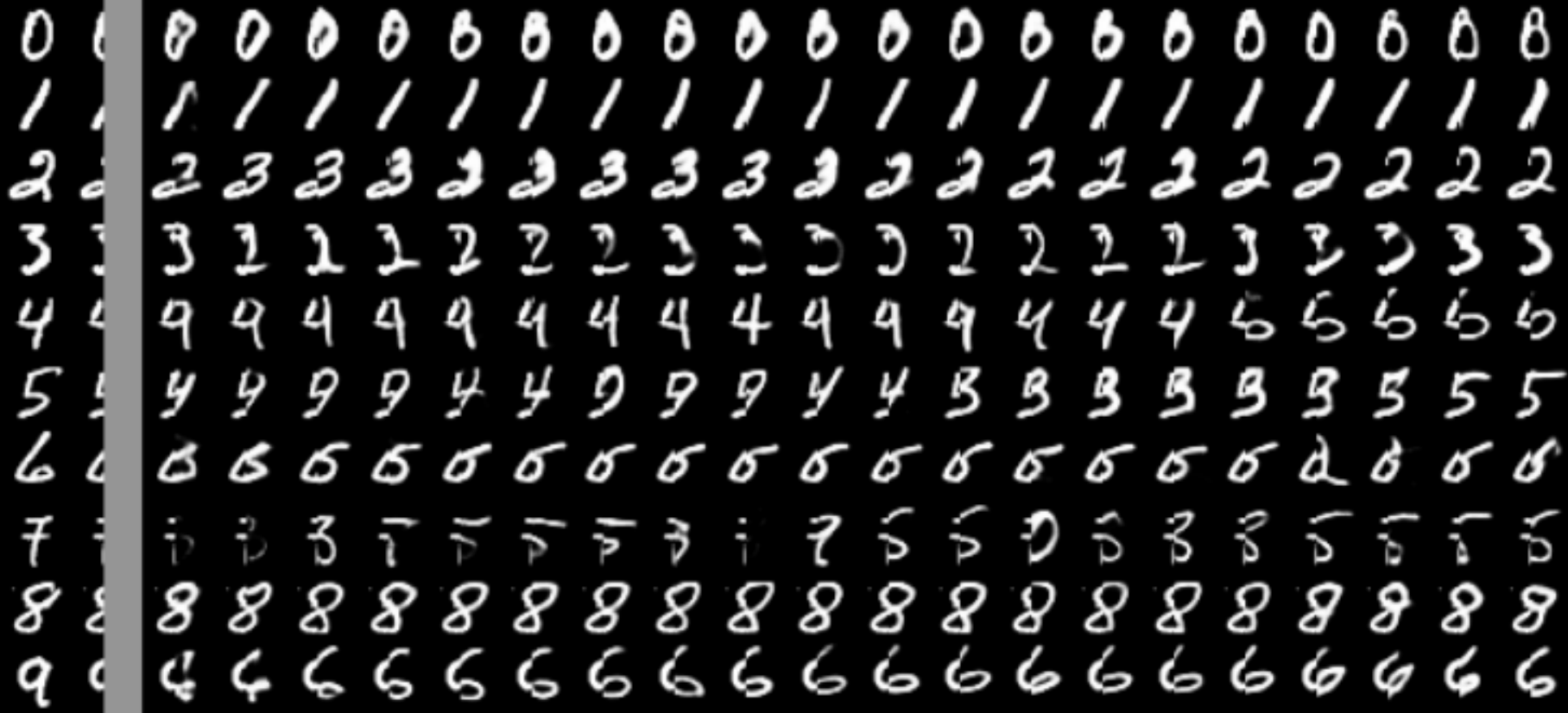}}
 \subfigure[CNN-B + AMC, $T=50$, $\eta = 0.02$]{
 \includegraphics[width=0.65\linewidth]{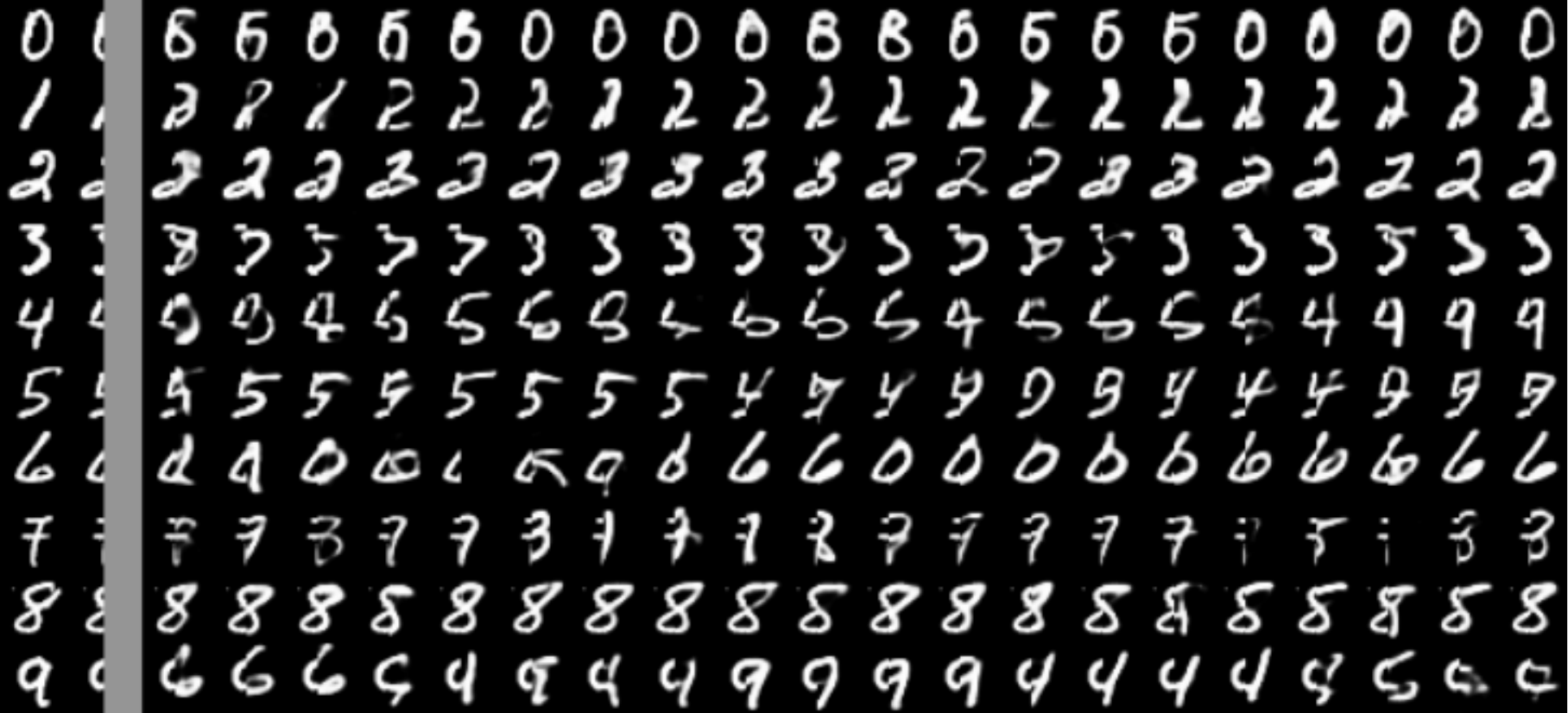}}
 \caption{Missing data imputation results. Removing the right half pixels.}
\end{figure}
\begin{figure}[t]
 \centering
 \subfigure[Gaussian encoder + VAE]{
 \includegraphics[width=0.65\linewidth]{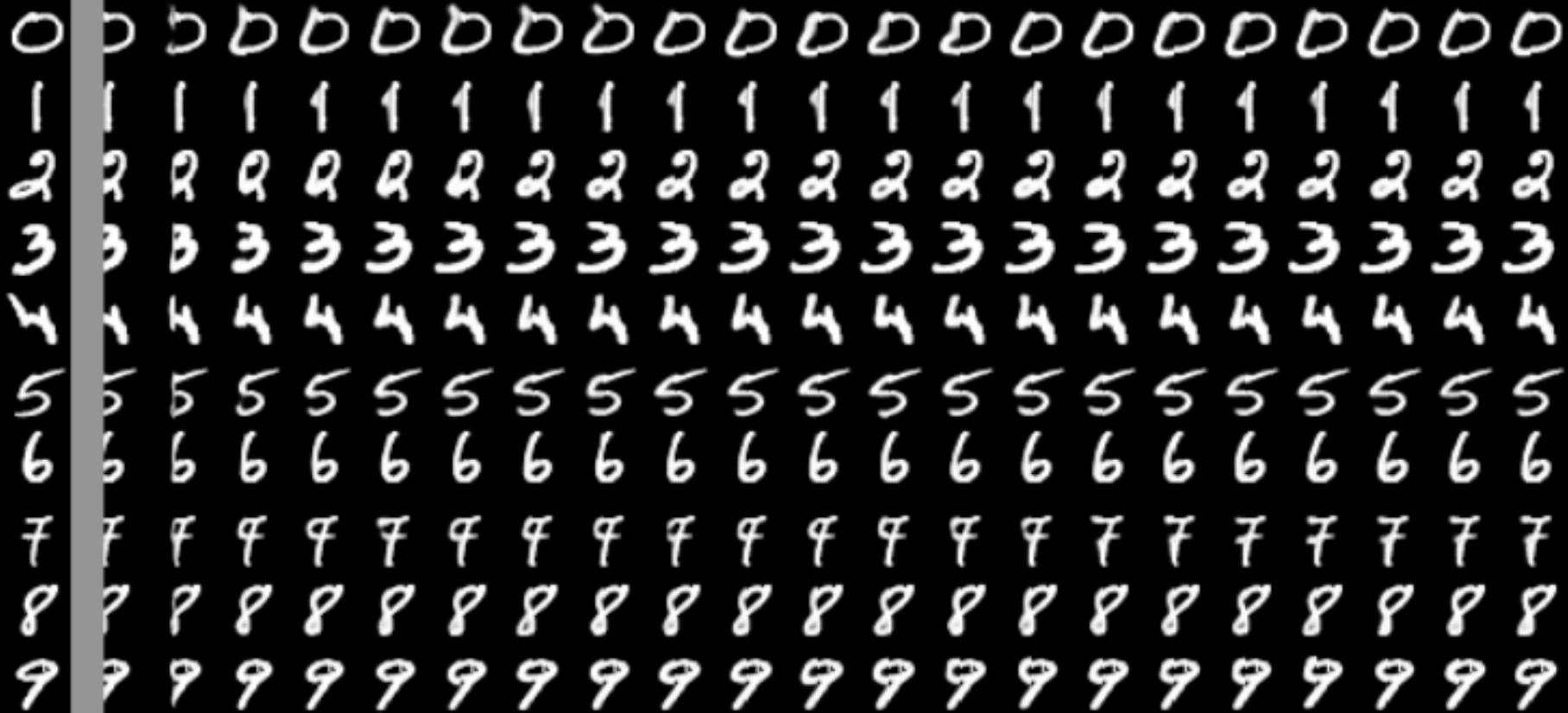}}
  \subfigure[CNN-G + AMC, $T=50$, $\eta = 0.02$]{
 \includegraphics[width=0.65\linewidth]{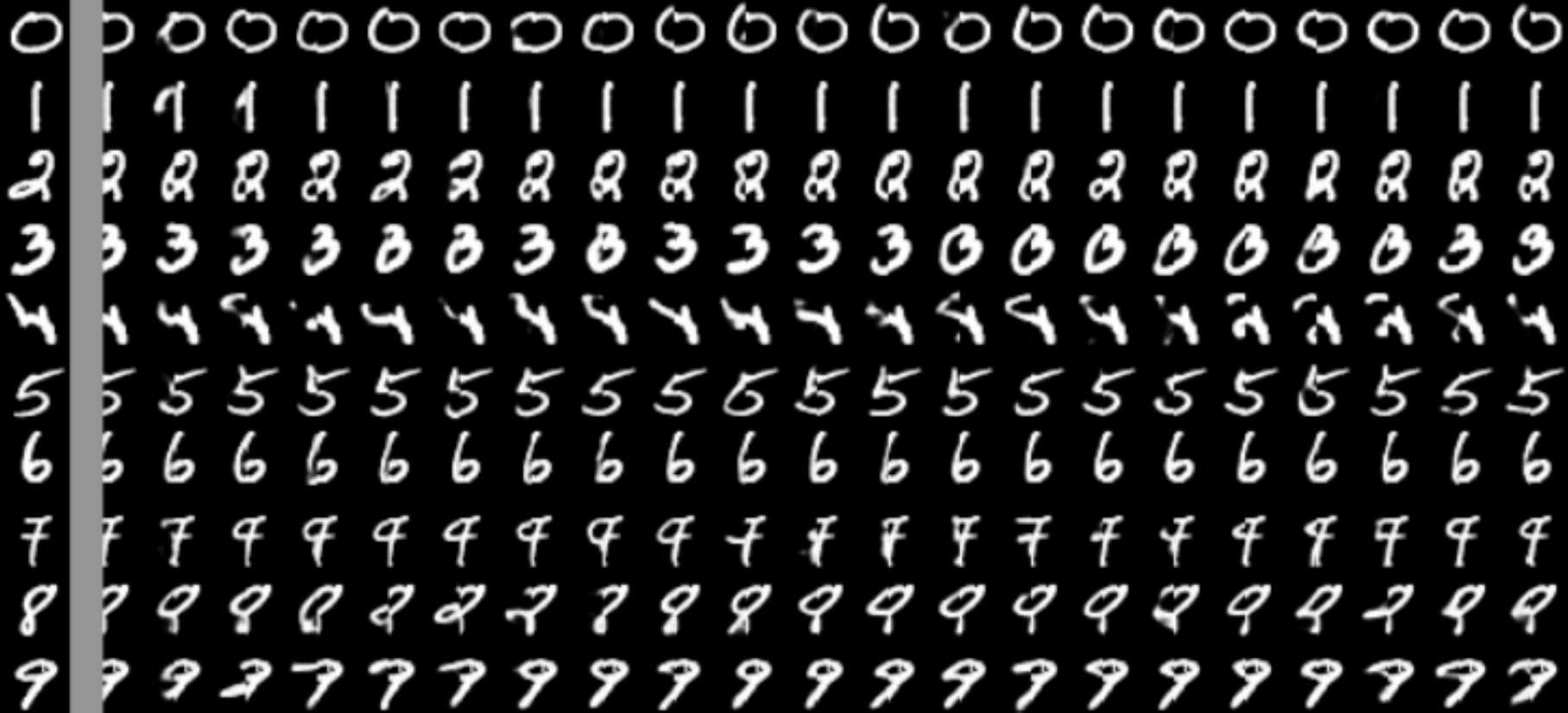}}
 \subfigure[CNN-B + AMC, $T=50$, $\eta = 0.02$]{
 \includegraphics[width=0.65\linewidth]{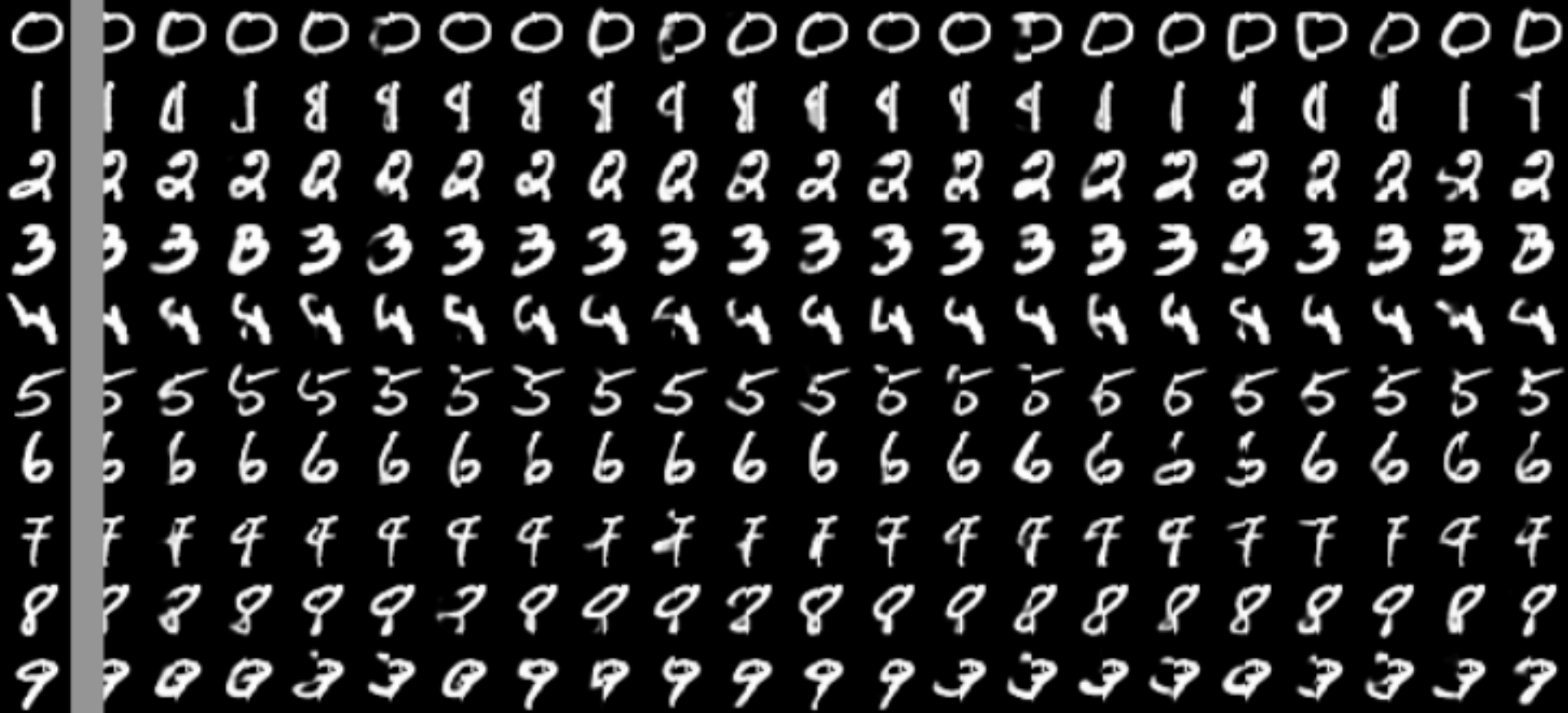}}
 \caption{Missing data imputation results. Removing the left half pixels.}
\end{figure}

\end{document}